\documentclass{article} 
\usepackage{iclr2026_conference,times}

\usepackage{amsmath,amsfonts,bm}

\def\eqref#1{equation~\ref{#1}}

\def\1{\bm{1}}

\DeclareMathAlphabet{\mathsfit}{\encodingdefault}{\sfdefault}{m}{sl}
\SetMathAlphabet{\mathsfit}{bold}{\encodingdefault}{\sfdefault}{bx}{n}

\usepackage{hyperref}
\usepackage{url}

\usepackage{array}   
\usepackage{booktabs} 
\usepackage{graphicx}
\usepackage{algorithm}
\usepackage{algorithmic}
\usepackage{amsthm,amsmath,amssymb}
\usepackage{mathrsfs}
\usepackage{multirow}   
\usepackage[table]{xcolor} 
\usepackage{colortbl}
\usepackage{rotating} 
\usepackage{float}

\usepackage{wrapfig}
\usepackage{subfig}
\usepackage{bbding}
\usepackage{caption}
\usepackage{titletoc}

\newtheorem{theorem}{Theorem}
\newtheorem{corollary}{Corollary}

\newtheorem{remark}{Remark}[section]
\newtheorem{definition}{Definition}[section]

\newenvironment{myquotation}{\setlength{\leftmargini}{5em}\quotation}{\endquotation}

\makeatletter
\def\thanks#1{\protected@xdef\@thanks{\@thanks
        \protect\footnotetext{#1}}}
\makeatother

\title{PACT: Preserving Anchored Cores in Task-vectors for Model Merging}

\author{
Ningyuan Shi$^{1,\star}$, \quad Zhipeng Zhou$^{2,\star}$, \quad Hao Wang$^{3}$, \quad Chunyan Miao$^{2}$, \quad Peilin Zhao$^{1,\dagger}$\\
$^{1}$Shanghai Jiao Tong University\thanks{$\star$ Equal contribution. $\dagger$ Corresponding author.} \quad 
$^{2}$Nanyang Technological University \\  
$^{3}$The Hong Kong University of Science and Technology (Guangzhou) \\  
{\ttfamily\small shiningyuanAccount@sjtu.edu.cn, zzpustcml@gmail.com} \\
{\ttfamily\small haowang@hkust-gz.edu.cn, ascymiao@ntu.edu.sg, peilinzhao@sjtu.edu.cn}
}

\definecolor{lightgreen}{RGB}{229,251,229}

\usepackage{acro}
\DeclareAcronym{Ours}{
  short = \texttt{PACT},
  long  = \textbf{P}reserve \textbf{A}nchored \textbf{C}ores}
  
    \usepackage{xspace}
\newcommand{\ours}[0]{\texttt{PACT}\xspace}

\iclrfinalcopy 
\begin{document}

\maketitle

\begin{abstract}
Model merging has emerged as a training-free alternative to multi-task learning, aiming to combine multiple task-specific fine-tuned models into a single multi-task model. Most existing model merging approaches follow the \emph{Task Arithmetic} paradigm, which decomposes fine-tuned weights into pre-trained parameters and task vectors, and performs merging exclusively in the task-vector space. The effectiveness of this paradigm implicitly relies on the assumption that task-specific knowledge is encoded solely within task vectors. In this paper, we argue that this assumption generally does not hold due to the intrinsic task preferences of pre-trained models. Specifically, we identify \textbf{Load-Bearing Wall (LBW) dimensions}, namely some task-critical knowledge that remains embedded in the pre-trained weights rather than being fully transferred into task vectors. We characterize LBW dimensions from both scalar-weight and subspace perspectives, thereby covering the major paradigms of existing model merging methods. Our analysis reveals that, by ignoring LBW dimensions, task-vector-based approaches fail to fully resolve task conflicts and may inadvertently damage task-specific knowledge encoded in the pre-trained model, leading to degradation. To address this issue, we propose \ac{Ours}, which preserves the anchored task-specific cores (i.e., LBW components) within task vectors by aligning their orthogonal complements with the subspace of the pre-trained weights. These aligned subspace components are then removed from the task vectors before applying existing model merging algorithms. Furthermore, we develop an efficient variant based on randomized SVD to improve scalability. Since \ac{Ours} approaches model merging from a complementary perspective, it can be seamlessly integrated with existing methods. Extensive experiments across multiple benchmarks demonstrate that \ac{Ours} consistently enhances mainstream model merging approaches and establishes new state-of-the-art performance. Code is available at \url{https://github.com/dajvmasonjason-glitch/pact}.
\end{abstract}

\section{Introduction}
Pre-trained models (PTMs) have become the foundation of modern machine learning systems, providing strong general-purpose representations across a wide range of domains and tasks~\citep{carion2020end,radford2021learning,caron2021emerging}. To achieve high performance on downstream applications, PTMs are typically specialized through task-specific fine-tuning~\citep{wortsman2022robust,ilharco2022patching}. However, extending such specialization to the multi-task setting often requires complex optimization procedures, substantial computational resources, and access to large amounts of task-specific training data~\citep{wei2024task}. Model merging has recently emerged as an attractive training-free alternative, aiming to combine multiple task-specific fine-tuned models into a single multi-task model without additional training~\citep{li2023deep,yang2026model}.

Existing model merging methods can be broadly categorized into scalar weight-based and subspace-based approaches~\citep{yang2026model,ruan2025task}. Scalar weight-based methods~\citep{ilharco2022editing,yadav2023ties,nguyen2025regmean++} treat individual parameter coordinates as the basic unit of task knowledge and perform coordinate-wise operations on model weights or task vectors. In contrast, subspace-based approaches~\citep{stoica2025model,gargiulo2025task,marczak2025no} represent task knowledge as a collection of important directions in parameter space and merge models by identifying, preserving, or recombining task-relevant subspaces. Despite their methodological differences, both paradigms operate primarily in the space of \emph{task vectors}, namely the parameter updates induced by fine-tuning.

The widespread adoption of task vectors is largely motivated by the insights of Task Arithmetic~\citep{ilharco2022editing}, which showed that task vectors corresponding to different tasks are often approximately orthogonal. This observation suggests that task-specific knowledge is largely encoded in task vectors and can therefore be manipulated independently. Consequently, most existing model merging methods implicitly assume that task vectors provide a sufficiently complete representation of task-specific knowledge, while the pre-trained model serves merely as a shared initialization point.
In this paper, we challenge this assumption by asking a simple yet fundamental question:

\begin{myquotation}
\vskip -0.05in
\emph{Is task-specific and -critical knowledge fully captured by task vectors?}
\vskip -0.05in
\end{myquotation}

Our answer is negative. Through a series of controlled experiments spanning both scalar weight-based and subspace-based merging paradigms, we identify a previously overlooked phenomenon: a portion of task-specific and -critical knowledge remains anchored in the pre-trained model itself. We refer to such dimensions as \textbf{Load-Bearing Wall (LBW)} dimensions. Although these dimensions often exhibit only small changes during fine-tuning and therefore contribute little to the task vector, they play a disproportionately important role in maintaining task performance. As a result, task vectors alone provide an incomplete representation of task-specific knowledge.
\begin{wrapfigure}[15]{r}{0.68\textwidth}
   \centering
   \includegraphics[width=\linewidth]{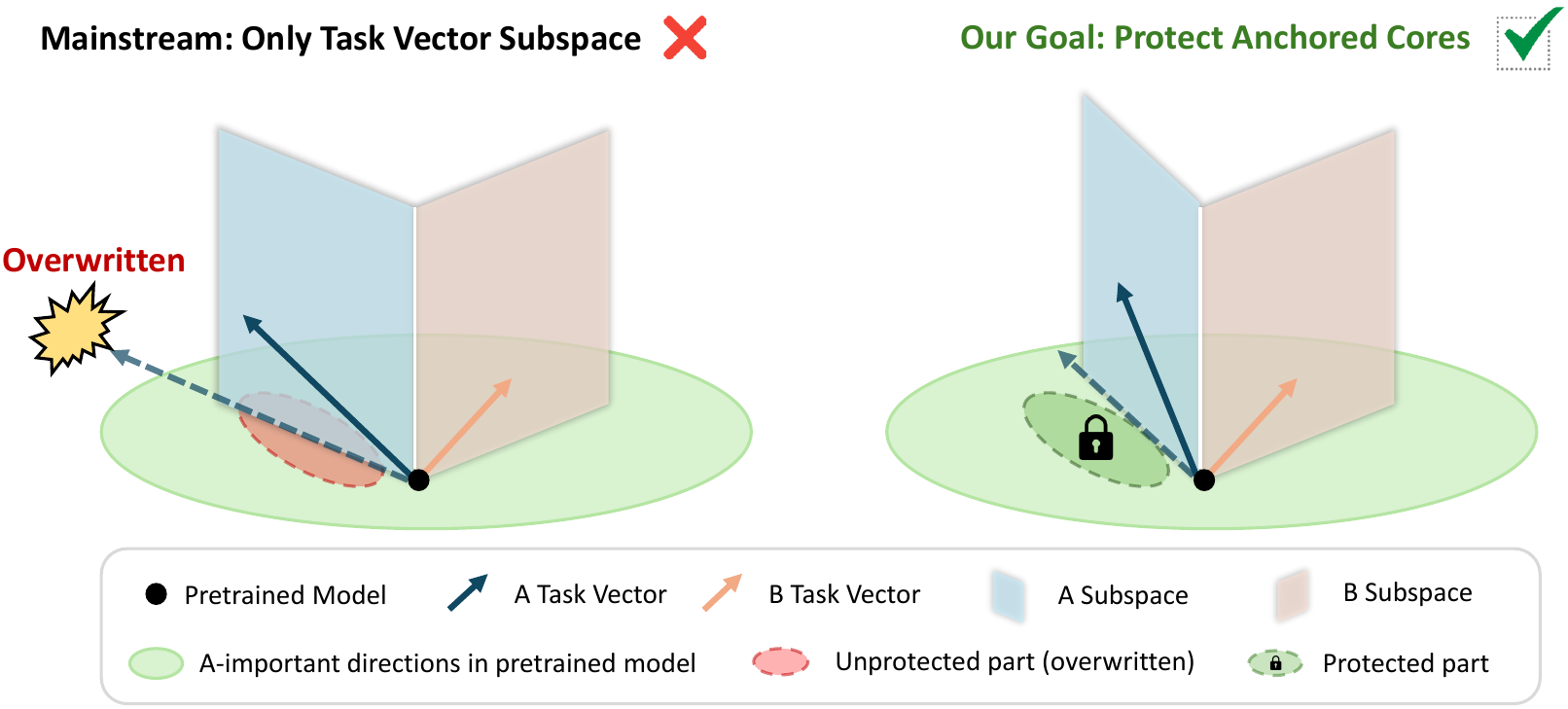} 
   \caption{Illustration of LBW dimensions in pre-trained model and its impact on model merging.} 
   \label{fig:intro} 
\end{wrapfigure}

This observation has important implications for model merging. Since existing approaches operate exclusively on task vectors, they are unable to explicitly protect LBW dimensions during merging. Consequently, updates introduced by other task vectors may overwrite or distort the pre-trained structures on which a task critically depends, leading to performance degradation after merging. Through destructive perturbation experiments, we verify both the existence of LBW components and their vulnerability during model aggregation, as illustrated in Figure~\ref{fig:intro}.

Building upon these findings, we propose \ours, a data-free framework for protecting task-specific LBW knowledge during model merging. The key idea is to identify and preserve the task-anchored components associated with LBW dimensions while filtering out task-vector components that interfere with the corresponding pre-trained subspace. Since \ours operates directly on task vectors, it can be seamlessly integrated with existing scalar weight-based and subspace-based merging algorithms. To further improve scalability, we develop an efficient variant based on randomized SVD with substantially reduced computational complexity.
Our contributions are summarized as follows:

\begin{itemize}
\item We revisit a fundamental assumption underlying Task Arithmetic and investigate whether task vectors fully capture task-specific knowledge. We empirically identify the existence of LBW dimensions and demonstrate their critical role in model merging.

\item We propose \ours, a data-free framework that protects task-specific LBW knowledge from interference during model merging. We further develop an efficient variant with improved scalability through randomized SVD.

\item Extensive experiments on vision and language benchmarks, covering both full fine-tuning and LoRA fine-tuning settings, demonstrate that \ours consistently improves mainstream model merging methods and establishes new state-of-the-art (SOTA) performance.
\end{itemize}

\section{Related Work}
Model merging aims to integrate multiple task-specific fine-tuned expert models into a single multi-task model without accessing their original training data, thereby avoiding the computational overhead and data privacy concerns of traditional multi-task learning. The task vector lies at the core of most static merging methods. Task Arithmetic~\citep{ilharco2022editing} performs multi-task merging by scaling and summing task vectors, establishing the plug-and-play merging paradigm. TIES-Merging~\citep{yadav2023ties} sequentially applies magnitude pruning, sign disambiguation, and disjoint aggregation to alleviate parameter conflicts. Consensus Merging~\citep{wang2024localizing} eliminates ``selfish'' and ``catastrophic'' weights that benefit only a single task at the expense of others. Concrete Merging~\citep{tang2023concrete} uses meta-learning to generate binary masks that suppress conflicting parameters. AWD~\citep{xiong2024multi} reduces interference by promoting orthogonality among task vectors. DARE~\citep{yu2024language} randomly drops part of the parameters as a dropout-style preprocessing step. MAP~\citep{li2025map} leverages a second-order Taylor expansion and linear regression to estimate the Hessian, providing loss-based guidance for merging. Both MetaGPT~\citep{zhou2024metagpt} and TATR~\citep{sun2025task} explicitly model the loss gap: the former derives a closed-form solution for the merging coefficient, while the latter models the loss gap as the product of gradients and task vectors to quantify knowledge conflicts.

Recent approaches exploit SVD to capture the low-rank structure of task matrices, operating within subspaces to align, orthogonalize, or equalize components so as to preserve shared knowledge while suppressing interference. KnOTS~\citep{stoica2025model}, designed for LoRA fine-tuning, merges adapters by aligning and averaging their right singular vectors. TSV~\citep{gargiulo2025task} applies SVD to each task matrix and whitens the resulting subspaces to reduce inter-task interference. Iso-C~\citep{marczak2025no} replaces the singular values of the summed task matrix with their mean, flattening the singular spectrum to expand the effective subspace and significantly improve the subspace alignment ratio. Iso-CTS~\citep{marczak2025no} extends this by injecting task-specific singular directions, balancing shared and task-specific knowledge. DOGE TA~\citep{wei2025modeling} formulates merging as a constrained optimization problem, constructs a shared subspace, and optimizes a correction vector via projected gradient descent with adaptive task-aware merging coefficients, further narrowing the performance gap to individual task models.

However, all these model merging methods rest on an implicit assumption: that the task-specific and -critical knowledge resides solely within the corresponding task vector and is independent of the pre-trained base model. Consequently, they overlook the LBW parameters within the pre-trained parameters that are critical for each specific task. Our study is the first to examine the role of pre-trained parameters for individual tasks and to protect the pre-trained core subspace by filtering LBW dimensions from task vectors.
\section{Motivation and Observation} \label{sec:motivation}
In this section, we first introduce the general task-vector formulation of model merging and establish the notation used throughout the remainder of the paper. We then validate our motivation from two perspectives across both scalar weight and subspace paradigms: (1) identifying the existence of \textbf{Load-Bearing Wall (LBW) dimensions}, and (2) demonstrating how these components are disrupted during model merging. Together, these observations provide key insights that motivate the design of our method presented in the next section.

\subsection{Preliminary}
Model merging seeks to combine multiple deep neural networks—each fine-tuned from an identical pre-trained model on separate tasks—into a single unified model. Let $\theta_0$ represent the pre-trained weights and $\theta_t$ denote the weights fine-tuned for task $t$, with $t=1,...,T$, where $T$ is the total number of tasks. We write $\theta^{(\ell)}_t$ for the weights at layer $\ell$ corresponding to task $t$, and let $L$ be the total number of layers. The goal of model merging is to determine a merging function $f$ such that the resulting model
\begin{align}\label{ta1}
    \theta^{(\ell)}_M=f(\theta^{(\ell)}_0,\{ \theta^{(\ell)}_t \} _{t=1}^T),\  \forall \  \ell=1,...,L
\end{align}
can perform all tasks that the individual models $\theta_t$ were trained on.

Following \citep{ilharco2022editing}, the layer-wise task vector $\Delta_t^{(\ell)}$ is defined as the the difference between the fine-tuned weights $\theta^{(\ell)}_t$ and the pre-trained weights $\theta^{(\ell)}_0$ at layer $\ell$: $\Delta_t^{(\ell)}=\theta^{(\ell)}_t-\theta^{(\ell)}_0$. For the rest of the paper, the superscript $\ell$ is dropped when the context is clear, and all definitions apply to an arbitrary layer. In Task Arithmetic, the merging function sums all task matrices and adds them back to the pre-trained weights:
\begin{align}\label{ta2}
    \theta_{TA}^{(\ell)}=\theta^{(\ell)}_0+\alpha\Delta_{TA}^{(\ell)},\ with \ \Delta_{TA}^{(\ell)}=\sum_{t=1}^T\Delta_t^{(\ell)}.
\end{align}
where $\alpha$ is a scaling factor chosen on a held-out validation set. This strategy enables the reuse and transfer of knowledge from several fine-tuned models to the pre-trained model without requiring additional training or access to the original training data. Building upon this, subsequent algorithms have focused on studying the conflicts among $\Delta_t$ and their aggregation methods, implicitly assuming that there is no interference between $\theta_0$ and $\Delta_{TA}$ and that they can be directly summed. Our research begins precisely from this premise. Before presenting our empirical evidences, let's first give a brief definition on the LBW dimensions.

\begin{definition}[LBW Dimensions]
    For a task $t$, a parameter dimension $i$ is called an LBW dimension if it contributes significantly to the task performance while receiving only negligible updates during fine-tuning, i.e., $| \Delta_t^{(i)} | \approx 0$. In such dimensions, task-relevant knowledge resides predominantly in the pretrained parameters $\theta_0^{(i)}$ rather than in the task vector $ \Delta_t^{(i)} $.
\end{definition}

\begin{remark}
The notion of LBW dimensions highlights a limitation of purely task-vector-based model merging. Since LBW dimensions receive only negligible updates during fine-tuning, their importance is unlikely to be reflected by the magnitude of the task vector. Consequently, sparsification-based methods, such as TIES-Merging~\citep{yadav2023ties}, may discard these dimensions despite their functional importance to the task. This indicates that part of the task-specific capability is inherited from the pre-trained model and remains anchored in $\theta_0$, rather than being explicitly encoded in $\Delta_t$.
Moreover, LBW dimensions can induce a hidden form of interference during merging. Consider two tasks $(A,B)$ with task vectors $\Delta_A$ and $\Delta_B$. If a dimension $i$ is an LBW dimension for task $A$, then $\Delta_A^{(i)}$ is small while $\theta_0^{(i)}$ remains crucial for preserving the performance of task $A$. When task $B$ introduces a large update $\Delta_B^{(i)}$ on the same dimension, the update may overwrite the pre-trained parameter value required by task $A$, leading to irreversible performance degradation. Importantly, this degradation is not fully explained by conflicts among task vectors themselves; rather, it arises from modifying task-critical pre-trained parameters that are invisible under the task-vector representation. Therefore, LBW dimensions expose a fundamental blind spot of task-vector-based merging methods: they protect or manipulate what is explicitly encoded in task vectors, but may overlook task-relevant knowledge that remains stored in the pre-trained model.
\end{remark}

\subsection{Identifying and Validating LBW Dimensions}
\label{parameter_analysis}

\subsubsection{LBW Dimensions in Scalar Weight} \label{sec:weight_attc}
To validate our hypothesis, we first introduce two metrics for identifying potential LBW dimensions, which are used throughout the subsequent experiments:
\begin{align}\label{fisher}
F_{\theta}=\frac{1}{N}\sum_{i=1}^N\left(\frac{\partial{\mathcal{L}}i}{\partial\theta}\right)^2, \qquad
E_B=\frac{d}{n}\frac{\sum_{\tau\in[\widehat{\Delta_A}]}\tau^2}{\sum_{\tau\in[\Delta_B]}\tau^2}.
\end{align}

Here, $F_{\theta}$ denotes the Fisher information~\citep{kirkpatrick2017overcoming}, where $N$ is the number of samples (1000 in our experiments) and $\mathcal{L}_i$ is the loss of the $i$-th sample. Fisher information quantifies the sensitivity of the loss to a parameter: a larger value indicates that small perturbations to the parameter induce larger changes in the loss. The second metric, $E_B$, is an enrichment score~\citep{subramanian2005gene}, where $\tau$ denotes an individual parameter, $\widehat{\Delta_A}$ is a subset of task vector $\Delta_A$, $n$ is the number of parameters in the subset, and $d$ is the total number of parameters in the layer. Intuitively, $E_B$ measures the extent to which task $B$ concentrates its updates within the dimensions selected from task $A$. Values greater than 1 indicate that task $B$ disproportionately modifies these dimensions relative to the layer-wide average.

\paragraph{Filter Process.} 
For each layer, we first select the $30\%$ of parameters in $\Delta_A$ with the smallest magnitudes. Among them, we retain the $30\%$ with the largest Fisher values, yielding a subset $\widehat{\Delta_A}$ that contains approximately $9\%$ of the layer's parameters. We then compute $E_B$ and retain only dimensions exhibiting statistically significant enrichment ($p<0.05$). The resulting parameters constitute the \underline{\emph{crucial mask}}. This procedure directly targets our hypothesis by identifying dimensions that are weakly represented in the task vector, highly important to task performance, and likely to be modified by another task.

The crucial mask contains fewer than $2\%$ of the model parameters, and all subsequent experiments are conducted on this subset. For comparison, we construct two control masks of identical size. The \underline{\emph{safe mask}} is obtained by selecting the $30\%$ of parameters with the lowest Fisher values and applying the same enrichment filtering procedure. The \underline{\emph{random mask}} is obtained by uniformly sampling parameters at random and applying the same filtering criterion.
\begin{figure*}
    \subfloat{\includegraphics[width = 0.245\textwidth]{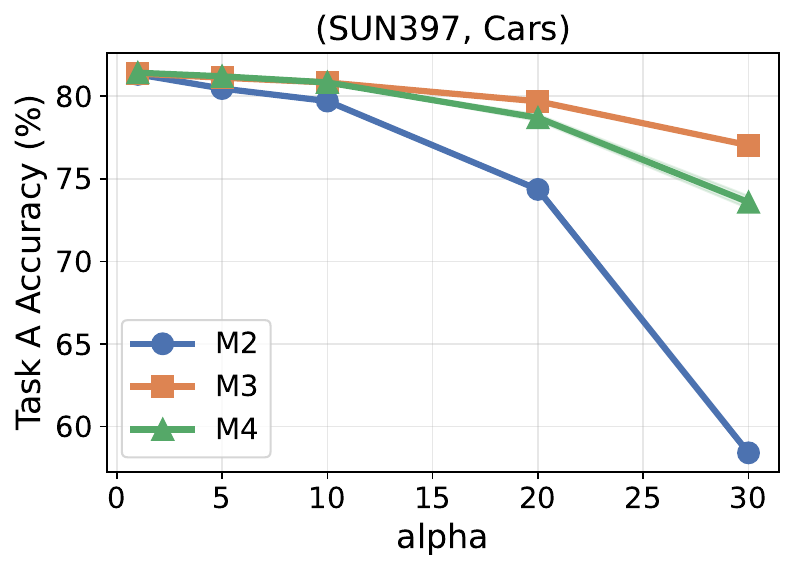}}
    \hfill
    \subfloat{\includegraphics[width = 0.245\textwidth]{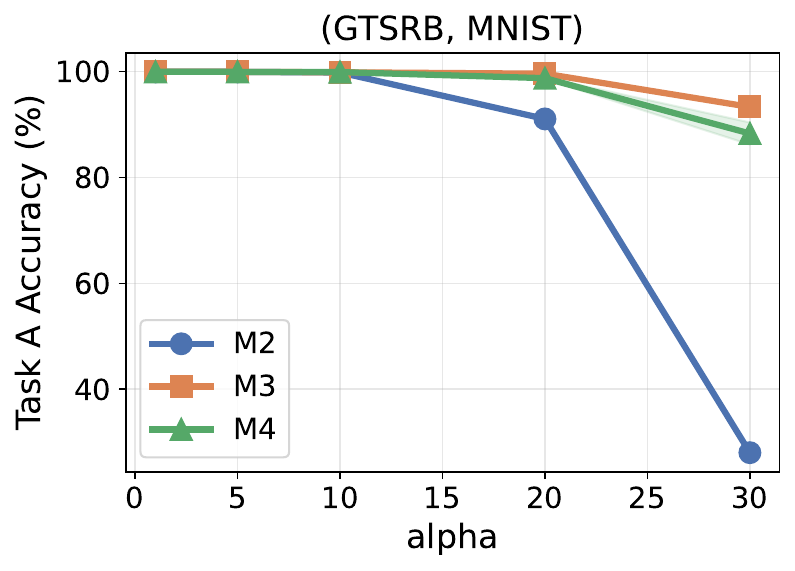}}
    \hfill
    \subfloat{\includegraphics[width = 0.245\textwidth]{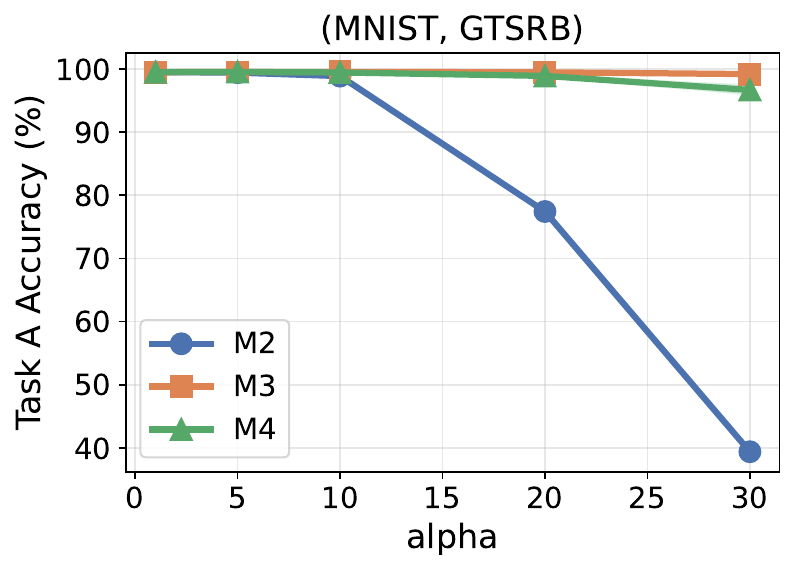}}
    \hfill
    \subfloat{\includegraphics[width = 0.245\textwidth]{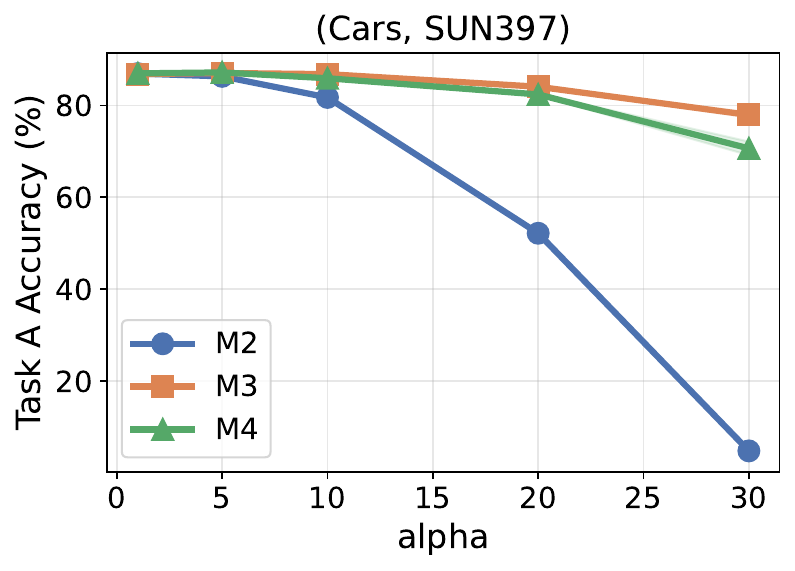}} \\   
    \subfloat{\includegraphics[width = 0.245\textwidth]{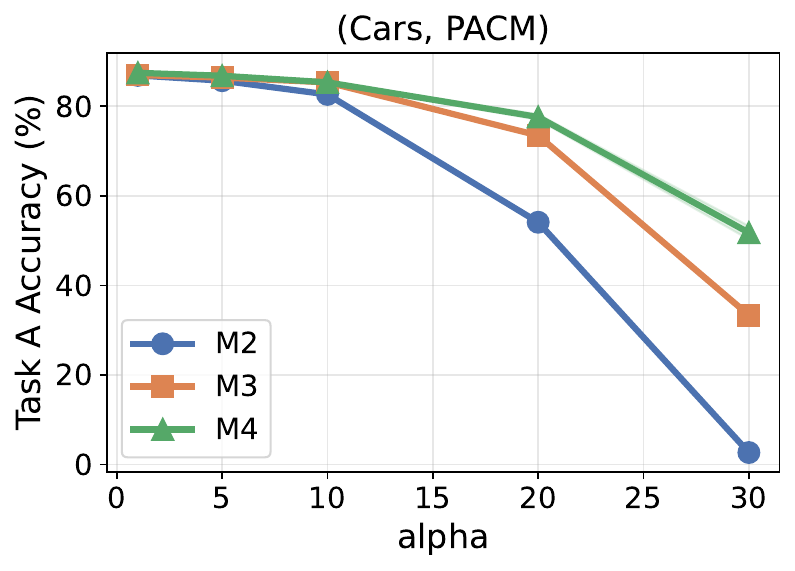}}
    \hfill
    \subfloat{\includegraphics[width = 0.245\textwidth]{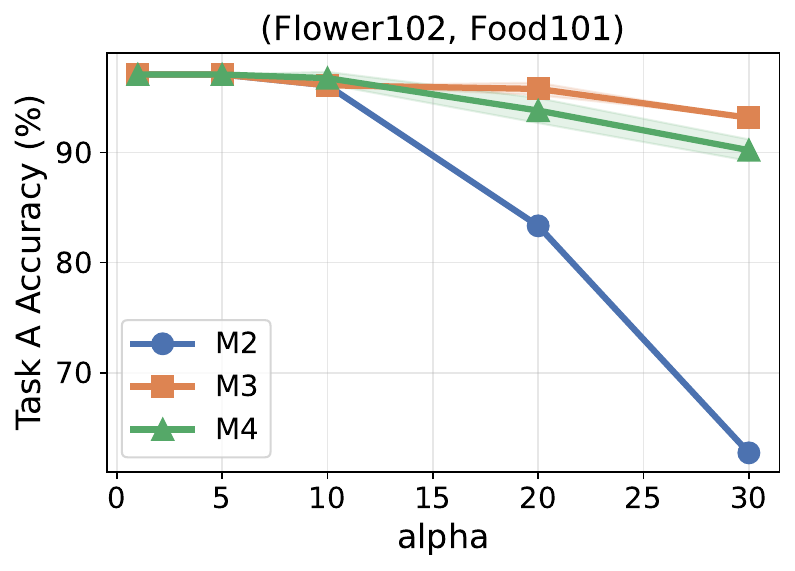}}
    \hfill
    \subfloat{\includegraphics[width = 0.245\textwidth]{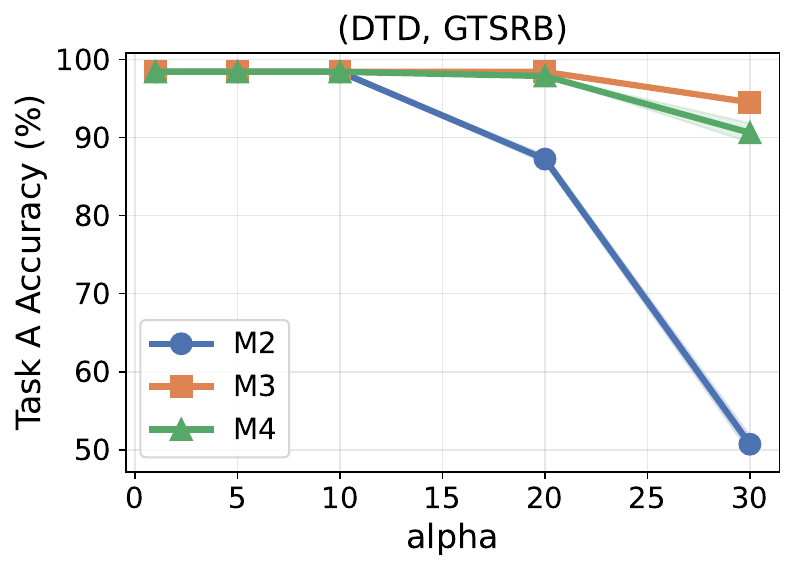}}
    \hfill
    \subfloat{\includegraphics[width = 0.245\textwidth]{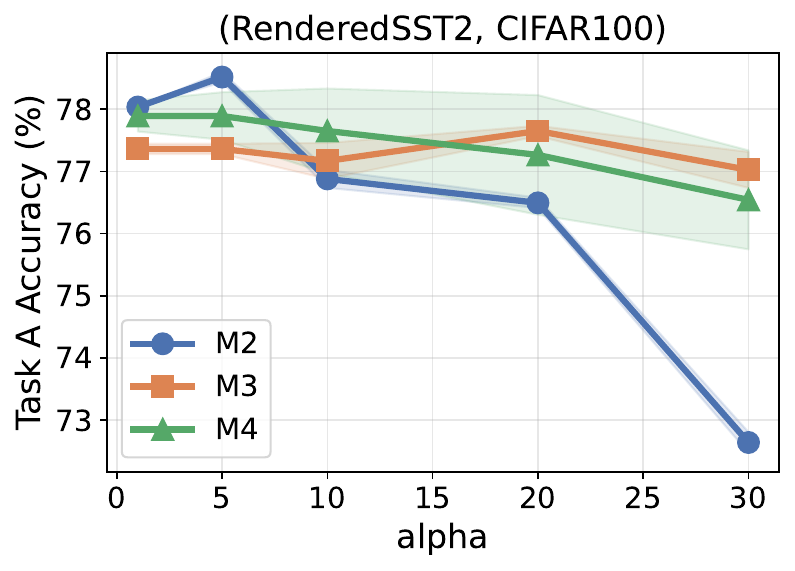}}
    \caption{Accuracy (\%) under varying attack strengths with respective to different masks. Generally, $M2$ is could be easily attacked while $M3$ and $M4$ are rather stable, indicating the importance of parameters within crucial mask. More results please refer to Appendix Sec.~\ref{app:fisher_analysis}.} 
    \label{fig:alpha_attack}
\end{figure*}

\paragraph{Do LBW Parameters Exist?}
We first investigate whether LBW parameters exist. Specifically, we select 16 task pairs $(A,B)$ from commonly employed benchmarks~\citep{wang2024localizing} and compute the crucial mask for task $A$. We then use task $B$'s task vector to perturb the corresponding dimensions. Let $M0$ denote the performance of task $A$'s fine-tuned model. We first restore the crucial-mask parameters to their pre-trained values, producing $M1$, to verify whether these dimensions indeed derive their functionality from the pre-trained model. We then perform three controlled perturbation experiments on the crucial, safe, and random masks, respectively: $\theta_0+\alpha\tau_B$, where $\alpha\in\{1,5,10,20,30\}$ controls the perturbation strength and $\tau_B$ denotes parameters from the corresponding dimensions in task vector $\Delta_B$. The resulting performances are denoted by $M2$ (crucial mask), $M3$ (safe mask), and $M4$ (random mask). All experiments are conducted using ViT-B/16.

Representative results are shown in Figure~\ref{fig:alpha_attack} and Table~\ref{tab:attk1}, with additional results provided in the Appendix. The near-identical performance of $M0$ and $M1$ indicates that the selected dimensions contribute little to the task vector itself, confirming that their functionality primarily originates from the pre-trained model. In contrast, perturbing the crucial mask ($M2$) causes a dramatic performance drop, whereas perturbations of identical magnitude applied to the safe mask ($M3$) and random mask ($M4$) lead to substantially smaller degradation. As shown in Figure~\ref{fig:alpha_attack}, performance decreases monotonically with increasing perturbation strength for all three masks, with the most severe degradation consistently observed on the crucial mask. These results provide direct evidence for the existence of LBW parameters.

\paragraph{Are LBW Parameters Damaged During Merging?}

Having established the existence of LBW parameters, we next examine whether these parameters are disrupted during model merging. Specifically, we first construct the merged model $\theta_0+\Delta_A+\Delta_B$ and evaluate its performance on task $A$. We then perform a targeted restoration by resetting only the crucial-mask parameters to their original pre-trained values in $\theta_0$, while keeping all other parameters exactly the same as in the merged model. The restored model is then evaluated again on task $A$. If this restoration leads to a clear performance recovery, it indicates that the crucial parameters of task $A$ have been overwritten during merging, and that such disruption directly contributes to the degradation of task $A$.

Representative results are reported in Table~\ref{tab:attk2}, with additional results provided in the Appendix Sec.~\ref{app:fisher_analysis}. Restoring only the crucial-mask parameters yields substantial performance improvements on task $A$. Given that the crucial mask accounts for less than $2\%$ of the total parameters, the magnitude of the recovery is remarkably large. These findings indicate that existing task-vector-based merging methods indeed modify LBW dimensions and thereby impair task-specific knowledge embedded in the pre-trained model. 
\begin{table}[t]
    \centering
    \scriptsize
    \setlength\tabcolsep{2pt}
    \begin{minipage}[t]{0.48\textwidth}
    \vspace{-0.05in}
      \resizebox{\linewidth}{!}{ 
      \begin{tabular}{c|cc>{\columncolor{pink!30}}ccc}
      \hline 
      \toprule
      ($A$,$B$) & $M0$ & $M1$ & $M2$ & $M3$ & $M4$ \\
      \midrule
      (Cars, SUN397) & 86.49 & 86.61 & 5.04 & 77.93 & 70.64 \\

      (GTSRB, MNIST) & 99.92 & 99.92 & 28.04 & 93.39 & 90.47 \\

      (Food101, Flowers102) & 89.56 & 89.60 & 55.29 & 84.01 & 83.16 \\

      (DTD, GTSRB) & 98.40 & 98.40 & 50.71 & 94.50 & 90.60 \\

      (OxfordIIITPet, SUN397) & 94.84 & 94.84 & 51.54 & 93.03 & 91.94 \\

      \bottomrule
      \hline 
      \end{tabular}
      }
      \caption{Accuracy (\%) under different parameters modifications. The value of $\alpha$ is set to 30.0.}
      \vspace{0.02in}
      \label{tab:attk1}
    \end{minipage}
    ~~~~
    \begin{minipage}[t]{0.48\textwidth}
    \vspace{-0.05in}
      \resizebox{\linewidth}{!}{
      \begin{tabular}{c|cc|>{\columncolor{pink!30}}c}
      \hline 
      \toprule
      ($A$, $B$) & $\theta_0+\Delta_A+\Delta_B$ & $\theta_0$ on crucial  & $\Delta_{\rm acc}$ \\
      \midrule
      (SUN397, Cars) & 70.97 & 71.78  & +0.81 \\
      (Cars, SUN397) & 78.62 & 79.36  & +0.74 \\

      (GTSRB, MNIST) & 91.55 & 93.47 & +1.91 \\

      (Cars, PACM) & 74.69 & 77.03 & +2.33 \\

      (DTD, GTSRB) & 92.55 & 94.68 & +2.13 \\

      \bottomrule
      \hline 
      \end{tabular}
      }
      \caption{$\Delta_{acc}$ represents the accuracy (\%) change due to the crucial mask replacement.}
      \vspace{0.02in}
      \label{tab:attk2}
    \end{minipage}
\end{table}

\subsubsection{LBW Dimensions in Subspace} 
\label{sec:subspace_analysis}
We have established the existence of LBW dimensions and their vulnerability during model merging. However, whether a similar phenomenon exists in subspace-based model merging methods~\citep{yang2026model,ruan2025task} remains unclear. While the Fisher information analysis reveals the vulnerability of specific parameters at a microscopic level, subspace-based approaches operate on low-rank geometric structures rather than individual weights. Therefore, to understand how task interference arises in these methods, we extend our analysis from parameter space to subspace geometry.

We perform SVD on both the pre-trained weight matrix $\theta_0 \in \mathbb{R}^{m \times n}$ and the task vectors $\Delta_1, \dots, \Delta_T \in \mathbb{R}^{m \times n}$:
\begin{align}\label{svd}
\theta_0 = U_0 \Sigma_0 V_0^\top, \quad \text{and} \quad \Delta_t = U_t \Sigma_t V_t^\top, \quad \forall t=1,\dots,T,
\end{align}
where $\Sigma_{*}$ denotes the diagonal matrix of singular values, and $U$ and $V$ denote the left and right singular vectors, respectively. Physically, for any input feature $x \in \mathbb{R}^n$, the forward pass transformation is governed by $\theta_0 x = U_0 \Sigma_0 V_0^{\top} x$. In this formulation, the right singular vectors $V_0$ act as an orthonormal basis for the input space, where the projection coordinates $V_0^{\top} x$ are scaled by the singular values $\Sigma_0$. Consequently, the top-$K$ right singular vectors $V_{0,K}$ represent the principal directions that preserve the maximum variance of the representations, thereby capturing the dominant structural foundation of the pre-trained model.

To identify LBW dimensions in the spectral domain, we extract the top-$K$ singular vectors of the pre-trained weights to form the base matrix $V_{0,K} \in \mathbb{R}^{n \times K}$. Similarly, we extract the top-$k$ singular vectors of $\Delta_t$ to form the active task matrix $V_{t,k} \in \mathbb{R}^{n \times k}$. Unless otherwise specified, we set $(K,k)=(15,8)$ throughout all experiments.

We define the \textit{LBW subspace} $V_{t,LBW}$ of task $t$ as the component of the pre-trained core subspace $V_{0,K}$ that lies outside the active update subspace of task $t$. Formally, it is obtained by projecting $V_{0,K}$ onto the orthogonal complement of $V_{t,k}$ followed by orthonormalization:
\begin{align}\label{pact1}
V_{t,LBW} = \text{orth}\left((I - V_{t,k}V_{t,k}^{\top})V_{0,K}\right)
\in \mathbb{R}^{n \times d_{pact}}.
\end{align}
Specifically, the projection matrix $I - V_{t,k}V_{t,k}^{\top}$ filters out components of the pre-trained core $V_{0,K}$ that lie within the active task updates $V_{t,k}$. This projection successfully isolates the pre-trained directions that are critical for task performance yet receive negligible updates during fine-tuning. The $\text{orth}$ operator then orthonormalizes these remaining components to construct a stable, non-redundant basis $V_{t,LBW}$ with dimension $d_{pact}$, which is at most $K$. Conceptually, $V_{t,LBW}$ serves as the spectral analogue of the LBW dimensions, capturing pre-trained feature directions that remain untouched during task adaptation.

\paragraph{Causal Validation via Subspace Ablation.}
To investigate whether model performance depends on $V_{t,LBW}$, we conduct a global subspace ablation study. Given a task-specific model $\theta_t = \theta_0 + \Delta_t$, we remove the LBW subspace by constructing $\theta_{abla} = \theta_t(I - V_{t,LBW}V_{t,LBW}^{\top}).$
We reasonably assume $\Delta_t$ is approximately orthogonal to $V_{t,LBW}$ by construction (i.e., $\Delta_tV_{t,LBW}\approx0$), this operation primarily removes the pre-trained components contained in $V_{t,LBW}$ while preserving the task-specific updates:
\begin{align}\label{ablation_approx} 
\theta_{abla} \approx \theta_0 (I - V_{t,LBW} V_{t,LBW}^\top) + \Delta_t 
\end{align}
We provide a formal derivation in Appendix Sec.~\ref{appendix:proof}. As a control group, we ablate a random orthogonal subspace $V_{rand}$ of identical dimensionality strictly within the non-core pre-trained space ($V_{0,K}^\perp$): $\theta_{rand} = \theta_t (I - V_{rand} V_{rand}^\top)$. To ensure a fair comparison, the control subspace $V_{rand}$ is constructed under two constraints: it must share the identical dimension $d_{pact}$ and lie strictly within the non-core pre-trained space orthogonal to $V_{0,K}$. Formally, we project a random Gaussian matrix $X_{rand}$ onto the orthogonal complement of the pre-trained core to obtain $(I - V_{0,K}V_{0,K}^{\top})X_{rand}$, and then apply QR decomposition to extract the orthonormal basis. From this basis, we randomly select and re-orthonormalize $d_{pact}$ columns to yield $V_{rand}$. Unlike $V_{t,LBW}$ which preserves the structured, task-essential components derived from the pre-trained core $V_{0,K}$, $V_{rand}$ represents purely unstructured directions within the non-core space $V_{0,K}^{\perp}$.

\begin{wraptable}[12]{r}{0.59\textwidth}
\vspace*{-\baselineskip}
\centering
\scriptsize
\setlength\tabcolsep{2pt}
\begin{tabular}{c|ccc>{\columncolor{pink!30}}c}
\hline
\toprule
Task $t$& $\theta_t$ & $\theta_{abla}\ (\Delta_{\rm acc}^{LBW})$ & $\theta_{rand}\ (\Delta_{\rm acc}^{rand})$  & $\Delta_{\rm acc}^{LBW}/\Delta_{\rm acc}^{rand}$ \\
\midrule
EMNIST & 99.78 & 17.00 (-82.78) & 99.75 (-0.03) & 3104.25$\times$\\
FashionMNIST & 95.28 & 9.78 (-85.50) & 95.14 (-0.14) & 610.71$\times$\\
GTSRB & 99.92 & 1.91 (-98.01) & 99.86 (-0.06) & 1566.60$\times$\\
KMNIST & 99.94 & 11.92 (-88.02) & 99.88 (-0.06) & 1467.00$\times$\\
STL10 & 99.60 & 11.00 (-88.60) & 99.33 (-0.27) & 332.25$\times$\\
SVHN & 96.76 & 7.40 (-89.36) & 96.51 (-0.25) & 352.74$\times$\\
\bottomrule
\hline
\end{tabular}
\caption{$\Delta_{acc}^{*}$ represents the accuracy (\%) change due to the subspace filtering. The random subspace filtering is averaged across three random seeds. The numbers in parentheses indicate the decrease in accuracy compared to the $\theta_t$.}
\label{tab:attk3}
\end{wraptable}

Table~\ref{tab:attk3} reports results on 19 tasks using ViT-B/16. Although $V_{t,LBW}$ accounts for only \textbf{1.17\%} of the total dimensions (780 out of 66,816), its removal leads to severe performance degradation across tasks. For example, EMNIST accuracy drops from \textbf{99.78\%} to \textbf{17.00\%} ($\downarrow 82.78\%$), while GTSRB accuracy decreases from \textbf{99.92\%} to \textbf{1.91\%} ($\downarrow 98.01\%$). In contrast, ablating a random control subspace has a negligible effect, with EMNIST and GTSRB retaining \textbf{99.75\%} ($\downarrow 0.03\%$) and \textbf{99.86\%} ($\downarrow 0.06\%$) accuracy, respectively. Additional results are provided in Appendix Sec.~\ref{app:geometric_subspace}.

On average, the performance degradation caused by removing $V_{t,LBW}$ is \textbf{458.05$\times$ larger} than that observed for the random control subspace. These results provide strong empirical evidence that the pre-trained components contained in $V_{t,LBW}$ play a disproportionately important role in downstream task performance, supporting our interpretation of $V_{t,LBW}$ as the subspace-level manifestation of LBW dimensions.

\paragraph{Interference of LBW Subspaces During Model Merging.}
Having established $V_{t,LBW}$ as the LBW subspace, we analyze how standard model merging disrupts these structures by introducing two geometric metrics for any task pair $(A, B)$ to quantify this interference. First, we define the symmetric \textit{Sacred Space Similarity} ($Sim$) to quantify the geometric overlap between the LBW subspaces of two distinct tasks, and the asymmetric \textit{Hidden Interference Ratio} ($Inf$) to quantify the direct intrusion of one task's updates into another task's parameters:
\begin{align}\label{sim_metric}
    Sim(A, B) = \frac{\|(V_{A,LBW})^\top V_{B,LBW}\|_F^2}{\min(r_A, r_B)}, \qquad
Inf(B \to A) = \frac{\|(V_{B,k})^\top V_{A,LBW}\|_F^2}{k}
\end{align}
where $r_A$ and $r_B$ denote the rank of their respective LBW subspace.Mathematically, $Sim(A,B)$ computes the sum of squared cosines of principal angles between two subspaces, a classical metric heavily utilized in Grassmanian manifold alignment~\citep{fernando2013unsupervised} and task singular vector synchronization~\citep{marczak2025no}. $Inf$ computes the exact proportion of Task $B$'s explicit active task vector updates ($V_{B,k}$) that invasively project onto the LBW subspace of Task $A$. Unlike parameter-wise ratios, the $Inf$ metric is purely geometric and scale-invariant, strictly evaluating the directional collisions between Task $B$'s core task vector updates and Task $A$'s implicit dependency on its LBW subspace. This projection-based interference measurement is closely related to the gradient projection metrics developed in continual learning for preventing catastrophic forgetting~\citep{farajtabar2020orthogonal,saha2021gradient}.
\begin{wrapfigure}[13]{r}{0.6\textwidth}
    \vspace*{-1.\baselineskip}
    \subfloat{\includegraphics[width = 0.29\textwidth]{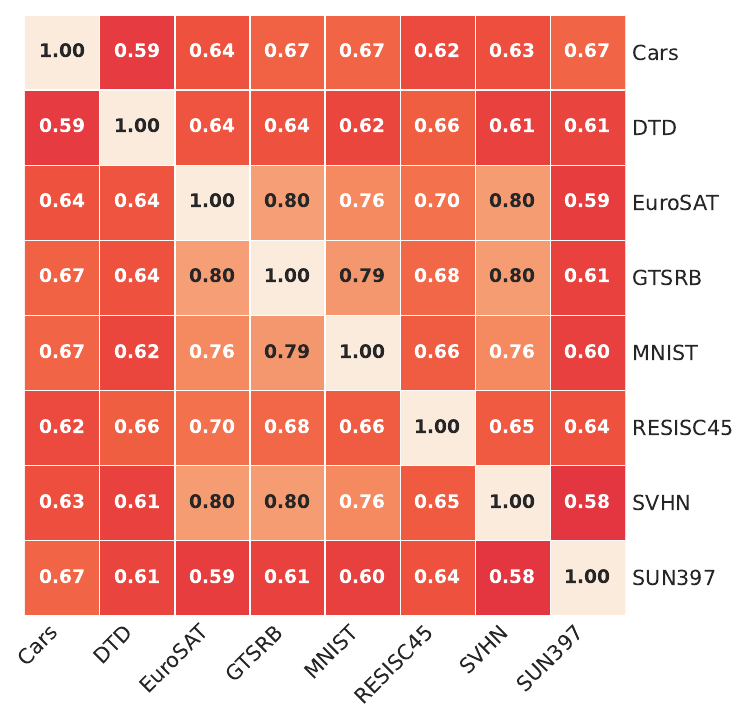}}
    \hfill
    \subfloat{\includegraphics[width = 0.29\textwidth]{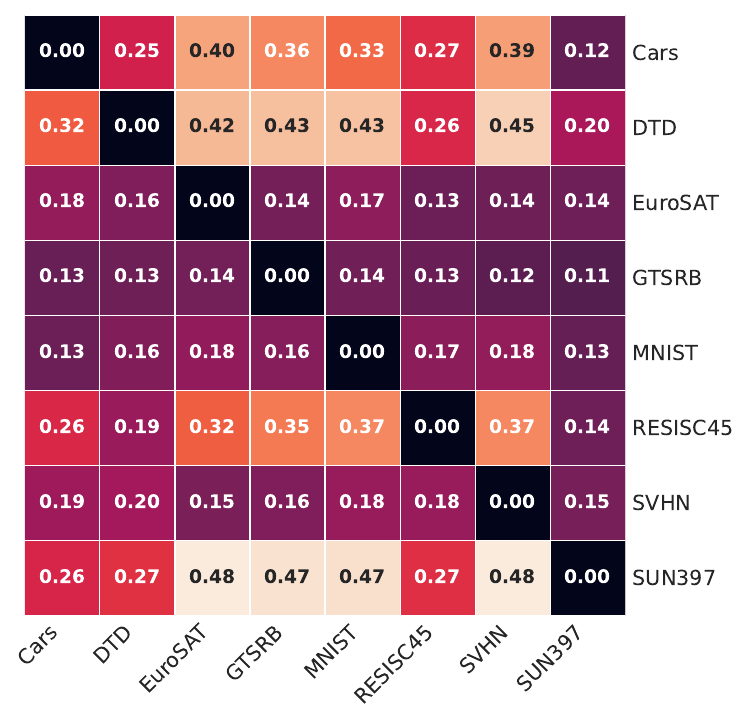}}
    \caption{$Sim$ and $Inf$ of the \texttt{mlp.c\_fc} layer in Block 0.} 
    \label{fig:interfb00}
\end{wrapfigure}
The empirical severity of these subspace-level collisions is fully exposed in the heatmaps for the foundational Block 0 (Figure~\ref{fig:interfb00}). On one hand, the $Sim$ heatmap reveals that \textbf{the similarity between the LBW subspaces of different tasks is relatively low to some extent}, with values dropping as low as $0.59$. This low similarity confirms that different tasks depend on highly heterogeneous subsets of pre-trained parameters, proving that a universal, task-agnostic ``safe zone'' does not exist; instead, each task carves out a unique, indispensable dependency on its own LBW subspace. On the other hand, the $Inf$ heatmap uncovers a non-negligible geometric collision, demonstrating that \textbf{the active updates of task vectors blindly invade the LBW subspaces of other tasks}. While the diagonal remains strictly $0.00$—confirming self-orthogonality—the off-diagonal entries skyrocket, showing that the explicit task vector updates of one task can obliterate up to \textbf{48\%} of the LBW subspace implicitly relied upon by another (for instance, SUN397 severely intrudes upon the LBW subspace of EuroSAT). This profound geometric collision highlights that naive model merging severely corrupts the foundational LBW subspace of downstream tasks (refer to Appendix for more results). This directly mandates a layer-wise mechanism to explicitly shield these LBW subspace during merging, motivating our design.

\section{Preserving Anchored Cores in Task-vectors}

While Fisher-based methods can theoretically locate these vital parameters, their heavy computational cost and requirement for training data preclude the possibility of explicitly protecting them in practical, data-free model merging scenarios. We therefore need a merging framework that is task-agnostic, data-free, and capable of geometrically shielding these implicit dependencies layer by layer. To this end, we propose \texttt{PACT}. Instead of directly summing task matrices, PACT operationalizes the isolated ``Sacred Spaces'' ($V_{t,LBW}$ derived in Eqn.~\ref{pact1}) into a universal geometric shield, enforcing strict orthogonal filtering to neutralize hidden interference before integration.

\begin{wrapfigure}{r}{0.50\textwidth}
    \begin{minipage}{0.50\textwidth}
        \vspace{-0.3in}
        \begin{algorithm}[H]
            \caption{The Process of \texttt{PACT}}
            \label{alg:pact}
            \begin{algorithmic}[1]
                \STATE \textbf{Input:} Pre-trained model $\theta_0$, task vectors $\{\Delta_t\}_{t=1}^{T}$
                \STATE \textbf{Output:} Filtered task vectors $\{\tilde{\Delta}_t\}_{t=1}^{T}$
                \STATE Compute the principal subspace $V_{0,K}$ of $\theta_0$.
                \FOR{$t=1,\ldots,T$} 
                    \STATE Compute the task subspace $V_{t,k}$.
                    \STATE Compute the anchored core subspace $V_{t,LBW}$ using Eqn.~\ref{pact1}.
                \ENDFOR

                \FOR{$t=1,\ldots,T$} 
                    \STATE Construct the protected subspace $V_{\mathrm{protect}}^{t}$ using Eqn.~\ref{pact2}.
                    \STATE Filter the task vector using Eqn.~\ref{pact3}.
                \ENDFOR
            \end{algorithmic}
        \end{algorithm}
        \vspace{-0.35in}
    \end{minipage}
\end{wrapfigure}

As defined in Section~\ref{sec:motivation}, $V_{t,LBW}$ precisely identifies the orthogonal anchor directions that a single task $t$ implicitly relies upon. However, merging $T$ distinct tasks requires a holistic defense mechanism: we must ensure that no task's explicit update interferes with the load-bearing walls of any other task. Simply filtering against each $V_{t,LBW}$ sequentially is mathematically flawed and computationally inefficient, as the sacred spaces across different tasks often share significant overlaps (reflected by the high similarities in Figure~\ref{fig:interfb00}). To construct a unified, non-redundant protection boundary for a given task t, we aggregate the sacred spaces of all other tasks. We then extract a unified orthonormal column basis using QR decomposition, denoted by the operator ${\rm orth}(\cdot)$:
\begin{align}\label{pact2}
    V_{\rm protect}^t = {\rm orth} \Big( \big[ V_{1, LBW} \ \big| \ \dots \ \big| \ V_{t-1, LBW} \ \big| \ V_{t+1, LBW} \ \big| \ \dots \ \big| \ V_{T, LBW} \big] \Big)
\end{align}

This resulting matrix $V_{\mathrm{protect}}^{t}$ serves as the \emph{Global Orthogonal Shield} for task $t$. It elegantly captures the union of all critical pre-trained dependencies required by the rest of the tasks, while eliminating overlapping basis vectors to prevent over-constraining the update space. Equipped with the global shield $V_{\mathrm{protect}}^{t}$, we can now surgically sanitize the update matrix of task $t$. We enforce strict non-interference by subtracting the projection of $\Delta_t$ onto the protected subspace:
\begin{align}\label{pact3}
    \tilde\Delta_t=\Delta_t-\Delta_t V_{\mathrm{protect}}^{t}(V_{\mathrm{protect}}^{t})^\top
\end{align}
where $V_{\mathrm{protect}}^{t}(V_{\mathrm{protect}}^{t})^\top$ is the orthogonal projection operator onto the protection space. Applying it to $\Delta_j$ extracts the components of $\Delta_t$ that would overwrite the anchor directions of other tasks. Subtracting these components removes such detrimental parts, retaining only the portions of $\Delta_t$ that are harmless to other tasks, thereby completing the filtering. In this process, the filtering of each task is jointly determined by the anchor directions of all other tasks, embodying a mutual protection mechanism across tasks.

The complete computational procedure is presented in Algorithm~\ref{alg:pact}. Considering that our algorithm requires performing $T+1$ SVD decompositions for each layer of the model, its time complexity reaches $O(mn\cdot \min(m,n))$. Moreover, it requires storing the full $U\in \mathbb{R}^{m\times m}$ and $V\in \mathbb{R}^{n\times n}$, which imposes substantial memory pressure for large-scale matrices. To address this, we propose an efficient alternative of \texttt{PACT}. Specifically, we can replace the full SVD decomposition in Eqn.~\ref{svd} with randomized SVD~\citep{halko2011finding}. Randomized SVD leverages random projections for low-rank approximation, and its ability to efficiently and approximately obtain the top-$k$ singular values and singular vectors is suited to the top-$k$ extraction process of \texttt{PACT}. Meanwhile, the steep singular value spectrum ensures the efficiency of randomized SVD. Through the adaptation of randomized SVD, we can reduce the time complexity to $O(mnk)$, while only needing to store the narrow matrix $Q\in \mathbb{R}^{m\times (k+p)}$ and a few small matrices, thereby substantially alleviating memory pressure as well. The computational complexity analysis is provided in the Appendix Sec.~\ref{sec:complexity}.

\paragraph{Theoretical Analysis.}
\label{sec:theoretical_analysis}

To rigorously justify the empirical success of \texttt{PACT}, we provide a theoretical analysis demonstrating its preservation of individual task performance.

\begin{theorem}[Projected First-Order Performance Preservation] \label{thm:projected_performance_preservation} 
Let $\theta_j=\theta_0+\Delta_j$ denote the task-specific fine-tuned parameters for task $j$, and let $P_j=V_{\mathrm{protect}}^j$ be the orthonormal basis of the protection space used by \texttt{PACT}. The filtered update is defined as $\tilde{\Delta}_j = \Delta_j \left( I-P_jP_j^\top \right).$ 
Assume that $\mathcal{L}_j$ is locally $\beta_j$-smooth around $\theta_j$. Then the degradation caused by the projection filtering satisfies 
$$
\mathcal{L}_j(\theta_0+\tilde{\Delta}_j) - \mathcal{L}_j(\theta_0+\Delta_j) \leq \left\| \nabla \mathcal{L}_j(\theta_j) P_j \right\|_F \left\| \Delta_j P_j \right\|_F + \frac{\beta_j}{2} \left\| \Delta_j P_j \right\|_F^2 . 
$$
In particular, the first-order degradation is bounded by 
\begin{equation} 
\Delta \mathcal{L}_j^{(1)} \leq \left\| \nabla \mathcal{L}_j(\theta_j) P_j \right\|_F \left\| \Delta_j P_j \right\|_F . \end{equation} 
\end{theorem}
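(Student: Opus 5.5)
The argument is a second-order Taylor expansion of $\mathcal{L}_j$ around the fine-tuned point $\theta_j$. The perturbation is the removed component, and the bound comes from the orthogonal-projection structure of that perturbation.

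First, the filtered model differs from $\theta_j$ by $\delta := \tilde{\Delta}_j - \Delta_j = -\Delta_j P_j P_j^\top$. So $\theta_0+\tilde{\Delta}_j = \theta_j + \delta$. Since $P_j$ has orthonormal columns,
\[
\|\delta\|_F^2 = \operatorname{tr}\!\left(\Delta_j P_j P_j^\top P_j P_j^\top \Delta_j^\top\right) = \|\Delta_j P_j\|_F^2 ,
\]
and the Frobenius norm of the perturbation equals $\|\Delta_j P_j\|_F$.

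Second, I use local $\beta_j$-smoothness around $\theta_j$ in its standard descent-lemma form. I assume the segment between $\theta_j$ and $\theta_j+\delta$ lies in the neighbourhood where smoothness holds; this is how I read ``locally''. The descent lemma gives
\[
\mathcal{L}_j(\theta_j+\delta) - \mathcal{L}_j(\theta_j) \le \langle \nabla \mathcal{L}_j(\theta_j), \delta\rangle_F + \frac{\beta_j}{2}\|\delta\|_F^2 .
\]
The quadratic term is exactly $\frac{\beta_j}{2}\|\Delta_j P_j\|_F^2$ by the first step.

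Third, I bound the linear term; this is the key step. Write $G = \nabla\mathcal{L}_j(\theta_j)$, viewed as a matrix of the same shape as $\Delta_j$ for this layer. Using cyclicity of the trace and idempotence of $P_jP_j^\top$,
\[
\langle G, \delta\rangle_F = -\operatorname{tr}\!\left(G^\top \Delta_j P_j P_j^\top\right) = -\operatorname{tr}\!\left((G P_j)^\top (\Delta_j P_j)\right) = -\langle G P_j, \Delta_j P_j\rangle_F .
\]
Cauchy--Schwarz then gives $|\langle G,\delta\rangle_F| \le \|G P_j\|_F\,\|\Delta_j P_j\|_F$. Only the gradient's component inside the protection space enters the bound, and that is the source of the projected form. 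Combining the three steps yields the stated inequality. The first-order statement $\Delta\mathcal{L}_j^{(1)} := \langle G,\delta\rangle_F$ is just the linear-term bound.

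The main obstacle is interpretive rather than technical. The loss depends on all layers, while PACT acts layer-wise. I would treat $\delta$ as block-structured across layers, with $\delta^{(\ell)} = -\Delta_j^{(\ell)} P_j^{(\ell)} P_j^{(\ell)\top}$. The trace identity then holds per layer, and the sum over layers combines via Cauchy--Schwarz, so the Frobenius norms read as norms over the stacked blocks. The locality assumption needed for the smoothness inequality should be stated explicitly.
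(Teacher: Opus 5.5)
Your proposal is correct and follows essentially the same route as the paper: write the filtered model as $\theta_j - \Delta_j P_j P_j^\top$, apply the $\beta_j$-smoothness (descent) inequality at $\theta_j$, rewrite the linear term as $\langle \nabla\mathcal{L}_j(\theta_j)P_j, \Delta_j P_j\rangle$ and bound it by Cauchy--Schwarz, and use $\|\Delta_j P_j P_j^\top\|_F = \|\Delta_j P_j\|_F$. Your two additional remarks are sensible clarifications that the paper leaves implicit: the segment from $\theta_j$ to the filtered point must lie in the smoothness neighbourhood, and the layer-wise blocks are stacked with a second Cauchy--Schwarz.
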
 

Theorem~\ref{thm:projected_performance_preservation} shows that the self-performance degradation is not determined by the overall norm of the update, but by the discarded update energy and the task sensitivity inside the protection space. Equivalently, defining 
$$
\rho_{g,j} = \frac{ \left\| \nabla \mathcal{L}_j(\theta_j) P_j \right\|_F }{ \left\| \nabla \mathcal{L}_j(\theta_j) \right\|_F }, \qquad \rho_{\Delta,j} = \frac{ \left\| \Delta_j P_j \right\|_F }{ \left\| \Delta_j \right\|_F }, 
$$ 
the first-order term is controlled by 
\begin{equation} 
\Delta \mathcal{L}_j^{(1)} \leq \left\| \nabla \mathcal{L}_j(\theta_j) \right\|_F \left\| \Delta_j \right\|_F \rho_{g,j}\rho_{\Delta,j}. \end{equation} 

\begin{corollary}[Active-Subspace Relaxation] \label{cor:active_subspace_relaxation} 
Let $V_{j,k}$ be the orthonormal basis of the active subspace of task $j$, and define $\cos(\phi_j) = \left\| V_{j,k}^{\top}P_j \right\|_2 . $
Suppose that both the gradient and the task update are approximately concentrated in $V_{j,k}$, namely 
$$
\left\| \nabla \mathcal{L}_j(\theta_j) \left( I-V_{j,k}V_{j,k}^{\top} \right) \right\|_F \leq \epsilon_g \left\| \nabla \mathcal{L}_j(\theta_j) \right\|_F, 
$$ and 
$$
\left\| \Delta_j \left( I-V_{j,k}V_{j,k}^{\top} \right) \right\|_F \leq \epsilon_\Delta \left\| \Delta_j \right\|_F . 
$$
Then the degradation caused by \texttt{PACT}'s filtering satisfies 
\begin{equation} 
\begin{aligned}
\mathcal{L}_j(\theta_0+\tilde{\Delta}_j) - \mathcal{L}_j(\theta_0+\Delta_j) \leq & \left\| \nabla \mathcal{L}_j(\theta_j) \right\|_F \left\| \Delta_j \right\|_F \left( \cos(\phi_j)+\epsilon_g \right) \left( \cos(\phi_j)+\epsilon_\Delta \right) \\ & + \frac{\beta_j}{2} \left\| \Delta_j \right\|_F^2 \left( \cos(\phi_j)+\epsilon_\Delta \right)^2 . \nonumber
\end{aligned} 
\label{eq:angle_relaxed_bound} 
\end{equation} 
In the exact active-subspace case where $\epsilon_g=\epsilon_\Delta=0$, the first-order degradation reduces to 
\begin{equation} 
\Delta \mathcal{L}_j^{(1)} \leq \left\| \nabla \mathcal{L}_j(\theta_j) \right\|_F \left\| \Delta_j \right\|_F \cos^2(\phi_j). \end{equation} 
\end{corollary}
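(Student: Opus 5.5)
The plan is to derive the Corollary directly from Theorem~\ref{thm:projected_performance_preservation}. That theorem already bounds the degradation by the two quantities $\|\nabla\mathcal{L}_j(\theta_j)P_j\|_F$ and $\|\Delta_jP_j\|_F$. So it suffices to show
\[
\|\nabla\mathcal{L}_j(\theta_j)P_j\|_F \le \|\nabla\mathcal{L}_j(\theta_j)\|_F\,(\cos(\phi_j)+\epsilon_g)
\quad\text{and}\quad
\|\Delta_jP_j\|_F \le \|\Delta_j\|_F\,(\cos(\phi_j)+\epsilon_\Delta),
\]
and substitute both into the first-order and second-order terms. The two estimates have the same form, so I would prove a single lemma for a generic matrix $X\in\mathbb{R}^{m\times n}$ satisfying $\|X(I-V_{j,k}V_{j,k}^\top)\|_F\le\epsilon\|X\|_F$. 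The lemma then applies with $X=\nabla\mathcal{L}_j(\theta_j)$, $\epsilon=\epsilon_g$ and with $X=\Delta_j$, $\epsilon=\epsilon_\Delta$.

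For the lemma, I split $X$ along the active subspace as $X = XV_{j,k}V_{j,k}^\top + X(I-V_{j,k}V_{j,k}^\top)$. Multiplying on the right by $P_j$ and applying the triangle inequality gives
\[
\|XP_j\|_F \le \|XV_{j,k}V_{j,k}^\top P_j\|_F + \|X(I-V_{j,k}V_{j,k}^\top)P_j\|_F .
\]
Both terms are handled with the mixed inequality $\|AB\|_F\le\|A\|_F\|B\|_2$.

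For the first term, I take $A=XV_{j,k}$ and $B=V_{j,k}^\top P_j$. This gives the bound $\|XV_{j,k}\|_F\,\|V_{j,k}^\top P_j\|_2 = \|XV_{j,k}\|_F\cos(\phi_j)$. Since $V_{j,k}$ has orthonormal columns, $\|XV_{j,k}\|_F=\|XV_{j,k}V_{j,k}^\top\|_F\le\|X\|_F$.

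For the second term, I use that $P_j$ has orthonormal columns, so $\|P_j\|_2\le 1$; the degenerate case of an empty protection space is trivial. The term is then at most $\|X(I-V_{j,k}V_{j,k}^\top)\|_F\le\epsilon\|X\|_F$. Adding the two terms gives $\|XP_j\|_F\le\|X\|_F(\cos(\phi_j)+\epsilon)$, which is the lemma.

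Plugging the lemma into Theorem~\ref{thm:projected_performance_preservation} gives the first-order factor $(\cos(\phi_j)+\epsilon_g)(\cos(\phi_j)+\epsilon_\Delta)$ and the curvature factor $\frac{\beta_j}{2}\|\Delta_j\|_F^2(\cos(\phi_j)+\epsilon_\Delta)^2$, as claimed. Setting $\epsilon_g=\epsilon_\Delta=0$ reduces the first-order term to $\|\nabla\mathcal{L}_j(\theta_j)\|_F\|\Delta_j\|_F\cos^2(\phi_j)$.

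There is no real obstacle here. The only care point is using the operator norm, rather than the Frobenius norm, on the factor $V_{j,k}^\top P_j$, so that the principal-angle quantity $\cos(\phi_j)$ appears rather than a dimension-dependent Frobenius overlap. I would also state explicitly that the estimate does not rely on $P_j$ and $V_{j,k}$ having compatible dimensions beyond both living in $\mathbb{R}^{n\times\cdot}$.
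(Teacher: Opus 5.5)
Your proof is correct and follows essentially the same route as the paper's: split along the active-subspace projector $V_{j,k}V_{j,k}^\top$, apply the triangle inequality, bound the aligned part by $\|V_{j,k}^\top P_j\|_2$ and the residual part by $\epsilon$ using $\|P_j\|_2\le 1$, then substitute into Theorem~\ref{thm:projected_performance_preservation}. Your version is cleaner than the paper's, which reuses the symbol $P_j$ for the projector $V_{j,k}V_{j,k}^\top$ and so garbles exactly the decomposition you write out explicitly.
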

\section{Experiments}
In this section, we evaluate the model merging performance of \texttt{PACT} under various experimental settings. We examine the effectiveness of \texttt{PACT} when combined with representative model merging approaches, e.g., TA, TSV-M and Iso‑C. Experiments are first conducted on fully fine‑tuned vision models, followed by tests on LoRA fine‑tuned vision models. We also investigate the effect of replacing full SVD with randomized SVD. More details and experiments (e.g., hyper-parameter sensitivity analysis) please refer to Appendix Sec.~\ref{app:ablation_studies} and Sec.~\ref{sec:exp_detail}, respectively.

\subsection{Fully Fine-tuned Vision Models}

\begin{table}[htbp]
\centering
\scriptsize   
\setlength{\tabcolsep}{4pt}
\begin{tabular}{lccccccccc}
\toprule
\multirow{2}{*}{Method} & \multicolumn{3}{c}{ViT-B/32} & \multicolumn{3}{c}{ViT-B/16} & \multicolumn{3}{c}{ViT-L/14} \\
\cmidrule(lr){2-4} \cmidrule(lr){5-7} \cmidrule(lr){8-10}
 & 8 tasks & 14 tasks & 20 tasks & 8 tasks & 14 tasks & 20 tasks & 8 tasks & 14 tasks & 20 tasks \\
\midrule
Zero-shot    & 48.3   & 57.2   & 56.1   & 55.3   & 61.3   & 59.7   & 64.7   & 68.2   & 65.2   \\
Fine-tuned   & 92.8   & 90.9   & 91.3   & 94.6   & 92.8   & 93.2   & 95.8   & 94.3   & 94.7   \\
Weight Averaging & 66.3(72.1) & 64.3(71.1) & 61.0(67.5) & 72.2(76.6) & 69.5(74.8) & 65.3(70.4) & 79.6(83.2) & 76.7(81.1) & 71.6(75.6) \\
TIES          & 75.1(81.0) & 68.0(74.8) & 63.4(69.9) & 79.7(84.3) & 73.2(78.7) & 68.2(73.3) & 86.9(90.7) & 79.5(84.1) & 75.7(79.8) \\
Consensus TA  & 75.0(80.8) & 70.4(77.4) & 65.4(72.0) & 79.4(83.9) & 74.4(79.9) & 69.8(74.9) & 86.3(90.1) & 82.2(86.9) & 79.0(83.2) \\
Iso-CTS & 86.2(92.8) & \underline{81.7(89.7)} & 78.1(85.5) & 91.1(96.1) & \textbf{86.4(92.8)} & 82.4(88.4) & \underline{94.7(98.8)} & \textbf{91.0(96.3)} & \underline{90.1(94.9)} \\
\midrule
Task Arithmetic   & 70.8(76.5) & 65.3(72.1) & 60.5(66.8) & 75.4(79.6) & 70.5(75.9) & 65.8(70.8) & 84.9(88.7) & 79.4(84.0) & 74.0(78.1) \\
\rowcolor{pink!30}\texttt{PACT-TA}   & 75.2(81.0) & 66.4(73.3) & 59.9(66.4) & 85.5(90.3) & 76.8(82.7) & 68.7(74.0) & 92.8(96.9) & 87.0(92.0) & 83.0(87.7) \\
TSV-M         & 85.9(92.3) & 80.1(87.9) & 77.1(84.3) & 89.0(93.9) & 84.6(91.0) & 80.6(86.5) & 93.0(97.0) & 89.2(94.4) & 87.7(92.5) \\
\rowcolor{pink!30}\texttt{PACT-TSV-M}   & \textbf{88.1(94.8)} & \textbf{82.0(89.6)} & \textbf{80.3(87.6)} & \underline{91.2(96.3)} & 85.4(91.5) & \textbf{83.8(89.6)} & 94.4(98.5) & 89.5(94.4) & 89.7(94.3) \\
Iso-C   & 86.3(92.9) & 80.3(88.1) & 75.5(82.5) & 90.6(95.6) & 84.8(91.1) & 79.6(85.4) & 94.2(98.3) & 89.3(94.5) & 87.6(92.2) \\
\rowcolor{pink!30}\texttt{PACT-Iso-C}   & \underline{87.3(94.1)} & 81.3(89.0) & \underline{78.7(86.0)} & \textbf{92.0(97.2)} & \underline{86.1(92.3)} & \underline{83.4(89.3)} & \textbf{95.1(98.3)} & \underline{90.5(95.5)} & \textbf{90.3(95.0)} \\
\bottomrule
\end{tabular}
\caption{\texttt{PACT-Iso-C} achieves SOTA performance for all backbones on all evaluated scenarios. We present average absolute accuracy and average normalized accuracy (in bracket) in $\%$. The best method in \textbf{bold} and the second-best \underline{underlined}.}
\label{tab:full-vit}
\end{table}

We evaluate our approaches over sets of 8, 14, and 20 datasets, following \citet{marczak2025no}. We provide the details of the datasets in Appendix. We consider three variants of CLIP \citep{radford2021learning} with ViT-B/32,ViT-B/16 and ViT-L/14 as visual encoders \citep{dosovitskiy2020image}. We use the checkpoints fine-tuned on the tasks above, provided in \citet{wang2024localizing}. The hyperparameters (K, k) are set to be (15,8).

We compare our approaches with the following model merging methods: weight averaging~\citep{wortsman2022model}, Task Arithmetic~\citep{ilharco2022editing}, TIES-Merging~\citep{yadav2023ties}, Consensus TA~\citep{wang2024localizing}, TSV-M~\citep{gargiulo2025task}, Iso-C and Iso-CTS~\citep{marczak2025no}. We include the results of the zero-shot model and fine-tuned models serving as lower-and upper-bound, respectively. We compare the results based on absolute and normalized accuracy following standard practice~\citep{wang2024localizing,gargiulo2025task,marczak2025no}.

Table \ref{tab:full-vit} presents the performance of multi-task model merging. \texttt{PACT} consistently improves across all settings over all three base methods: TA, TSV-M, and Iso-C. Notably, \texttt{PACT-IsoC} achieves SOTA results in all 9 settings, and its gain over Iso-C becomes even more pronounced as the number of tasks increases, surpassing the protection offered by Iso-CTS in the common space. \texttt{PACT-TSV-M} also attains SOTA performance on ViT-B/32. Moreover, \texttt{PACT-TA} delivers highly competitive results, with the three experiments on ViT-L/14 all approaching the SOTA level. These substantial improvements further corroborate the existence of the LBW effect within pre-trained parameters.

\subsection{LORA-adaptive Vision Models}

To evaluate our approaches in low-rank adaptation scenario, we follow the evaluation protocol of KnOTS~\citep{stoica2025model}, a recent SOTA method for merging LoRA fine-tuned models. We use codebase and checkpoints provided by KnOTS: ViT-B/32 and ViT-L/14 fine-tuned with rank 16 LoRA~\citep{hu2022lora} on 8 vision tasks. To adapt our methodology to the low-rank regime, rather than operating on the reconstructed dense matrix $\Delta W_t=B_tA_t$, we simply perform SVD on the small matrix $A_t$. This is mathematically justified as they share the exact same row space, providing a computationally free trick to extract the task subspace. We compare our approaches with Iso-C, Iso-CTS, TIES and DARE-TIES~\citep{yu2024language} - combined with KnOTS or not – and TA. The hyperparameters (K, k) are set to be (7, 4).

As presented in Table~\ref{tab:lora}, \texttt{PACT-Iso-C} surpasses all methods in both experiments, achieving SOTA performance, which demonstrates the versatility of our merging approach. Meanwhile, \texttt{PACT-TA} also achieves performance close to KnOTS, a method specifically designed for LoRA fine-tuning, indicating that the LBW effect also exists in LoRA fine-tuning, even though the pre-trained parameters remain untouched during the fine-tuning process.

\begin{wraptable}[12]{r}{0.45\textwidth}
\centering
\scriptsize   
\setlength{\tabcolsep}{10pt}
\vspace{-0.6in}
\begin{tabular}{lcc}
\toprule
Method & ViT-B/32 & ViT-L/14 \\
\midrule
TIES          & 63.7 & 75.2 \\
DARE-TIES     & 63.7 & 74.7 \\
KnOTS-TIES    & 68.0 & 78.2 \\
KnOTS-DARE-TIES & 63.9 & 75.6 \\
Iso-CTS & \underline{73.7} & \underline{85.3} \\
\midrule
Task Arithmetic     & 63.7 & 74.4 \\
\rowcolor{pink!30}\texttt{PACT-TA} & 66.9 & 77.7 \\
Iso-C   & 73.6 & 83.7 \\
\rowcolor{pink!30}\texttt{PACT-Iso-C} & \textbf{75.1} & \textbf{86.3} \\
\bottomrule
\end{tabular}
\caption{Normalized per-task average accuracy. We merge 8 models fine-tuned with LoRA following~\citep{stoica2025model}. \texttt{PACT-Iso-C} achieves SOTA performance  on all evaluated scenarios.}
\label{tab:lora}
\vspace{-0.2in}
\end{wraptable}

\subsection{Efficient Substitute}

We evaluate the effect of replacing the full SVD in \texttt{PACT} with randomized SVD (RSVD)~\citep{halko2011finding}, as reported in Table~\ref{tab:rsvd}. Under exactly the same hyperparameters (K, k, $\alpha$), RSVD achieves performance on par with full SVD, and in one-third of the experiments it even surpasses the full SVD results, which is quite counterintuitive. We attribute this to the implicit denoising effect of random projection~\citep{halko2011finding}. Full SVD faithfully extracts all directions, including the potential overfitting noise hidden in $\Delta_t$; in contrast, the random Gaussian projection used in RSVD inherently acts as a low-pass filter. It robustly captures the main load-bearing wall components while naturally smoothing out high-frequency noise. Thus, RSVD provides a "free lunch" for \texttt{PACT}: as a structural regularizer, it reduces the complexity by several orders of magnitude while simultaneously enhancing the generalization ability of the merged model. We summarize and compare the computational complexity of \texttt{PACT} and its variant against these baselines in Appendix Sec.~\ref{sec:complexity}.

\begin{table}[htbp]
\centering
\scriptsize   
\setlength{\tabcolsep}{4pt}

\begin{tabular}{lccccccccc}
\toprule
\multirow{2}{*}{Method} & \multicolumn{3}{c}{ViT-B/32} & \multicolumn{3}{c}{ViT-B/16} & \multicolumn{3}{c}{ViT-L/14} \\
\cmidrule(lr){2-4} \cmidrule(lr){5-7} \cmidrule(lr){8-10}
 & 8 tasks & 14 tasks & 20 tasks & 8 tasks & 14 tasks & 20 tasks & 8 tasks & 14 tasks & 20 tasks \\
\midrule
TSV-M   & 85.8(92.3) & 78.6(85.9) & 76.3(83.4) & 88.8(93.8) & 83.1(89.1) & 80.0(85.6) & 93.0(97.1)
 & 87.7(92.6) & 87.0(91.5) \\
\rowcolor{lightgreen} RSVD & 85.5(92.0)& \textbf{78.8(86.2)} & 76.1(83.4) & 88.7(93.6) & 82.9(88.9) & 79.8(85.4) & 93.0(97.0)
 & 87.6(92.5) & 86.9(91.5) \\
\midrule
\texttt{PACT-TA}   & 75.2(81.0) & 66.4(73.3) & 59.9(66.4) & 85.5(90.3) & 76.8(82.7) & 68.7(74.0) & 92.8(96.9) & 87.0(92.0) & 83.0(87.7) \\
\rowcolor{lightgreen}RSVD & \textbf{75.6(81.5)} & 66.4(73.3) & 59.9(66.4) & \textbf{85.6(90.5)} & 76.8(82.7) & 68.7(74.0) & \textbf{92.9(97.0)} & \textbf{87.3(92.2)} & 83.0(87.6) \\
\midrule
\texttt{PACT-IsoC}   & 87.3(94.1) & 81.3(89.0) & 78.7(86.0) & 92.0(97.2) & 86.1(92.3) & 83.4(89.3) & 95.1(98.3) & 90.5(95.5) & 90.3(95.0) \\
\rowcolor{lightgreen}RSVD & 87.3(94.1) & \textbf{81.9(89.7)} & 78.7(86.0) & 92.0(97.3) & \textbf{86.3(92.7)} & 83.2(89.1) & \textbf{95.2(99.3)} & 90.5(95.5) & 90.3(95.0) \\
\midrule
\texttt{PACT-TSVM}   & 88.1(94.8) & 82.0(89.6) & 80.3(87.6) & 91.2(96.3) & 85.4(91.5) & 83.8(89.6) & 94.4(98.5) & 89.5(94.4) & 89.7(94.3) \\
\rowcolor{lightgreen}RSVD   & 88.1(94.8) & 81.8(89.5) & 80.0(87.2) & 91.1(96.2) & 85.3(91.4) & \textbf{83.9(89.7)} & \textbf{94.5(98.6)} & 89.5(94.3) & 89.7(94.3) \\
\bottomrule
\end{tabular}
\caption{Results of replacing SVD with RSVD. We present average absolute accuracy and average normalized accuracy (in bracket) in $\%$. Performance improvements are shown in \textbf{bold}.}
\label{tab:rsvd}
\end{table}

\section{Conclusion and Future Work}
\label{sec:conclusion}

In this work, we have addressed a fundamentally overlooked assumption in existing task-vector-based model merging paradigms---namely, that pre-trained base parameters remain inert and task performance depends solely on task-vector modifications. Through microscopic Fisher information analysis and macroscopic SVD subspace projections, we have identified and causally validated the existence of LBW dimensions. These low-dimensional, high-energy pre-trained structures remain untouched during downstream fine-tuning, yet are critically essential for preserving task-specific inference capabilities. Our extensive causal ablation experiments demonstrate that a mere $1.17\%$ parametric disruption to these localized pre-trained core structures leads to a catastrophic collapse of downstream performance, establishing their indispensable structural role.

To mitigate the severe subspace collisions on these LBW during unconstrained merging, we proposed \texttt{PACT}. By elegantly constructing a global orthogonal shield, \texttt{PACT} surgically filters out interfering update components of other tasks from each expert's load-bearing subspace. This data-free geometric approach is highly modular and compatible with all existing task-vector-based merging methods. Empirical evaluations across 20 diverse vision tasks, spanning different backbones (ViT-B and ViT-L) and adaptation paradigms (full fine-tuning and LoRA), demonstrate that PACT-enhanced merging algorithms consistently outperform baseline methods. Furthermore, our efficient variant based on randomized SVD successfully decouples the cubic complexity barrier from the number of tasks, ensuring strong scalability for large-scale merging scenarios.

Our findings offer a new, geometer-centric perspective on the mechanics of deep neural networks, suggesting that model merging is not merely about resolving conflicts among task updates, but actively respecting and shielding the critical pre-trained foundations. Moving forward, several promising research directions emerge. First, extending the \texttt{PACT} framework to large language models (LLMs) and investigating the capacity limits of protected subspaces under hundreds of tasks remains a critical next step. Second, exploring non-linear projection operators and adaptive, weight-importance-aware dimensionalities for $V_{t,LBW}$ could further refine the surgical precision of the orthogonal shield. We hope this work inspires deeper exploration into the geometric structures of pre-trained models and paves the way for more robust multi-task model synthesis. We also summarize our limitations in Appendix Sec.~\ref{sec:limit}.
\newpage

\bibliography{iclr2026_conference}
\bibliographystyle{iclr2026_conference}

\clearpage
\appendix

\setcounter{tocdepth}{2}
\startcontents[appendix]
\printcontents[appendix]{}{1}{}

\newpage

\section{Proofs of Theoretical Properties for \texttt{PACT}}
\label{app:theoretical_proofs}

\subsection{Proof of Theorem \ref{thm:projected_performance_preservation}} 
\begin{proof}
    Let 
    \begin{equation} 
        D_j = \Delta_j-\tilde{\Delta}_j = \Delta_j P_jP_j^\top 
    \end{equation} be the component discarded by \texttt{PACT}. Then \begin{equation} 
        \theta_0+\tilde{\Delta}_j = \theta_j-D_j . 
    \end{equation}
    Since $\mathcal{L}_j$ is locally $\beta_j$-smooth around $\theta_j$, we have 
    \begin{equation} 
        \mathcal{L}_j(\theta_j-D_j) \leq \mathcal{L}_j(\theta_j) - \left\langle \nabla \mathcal{L}_j(\theta_j), D_j \right\rangle + \frac{\beta_j}{2} \|D_j\|_F^2 . 
    \end{equation} 
    Substituting $D_j=\Delta_jP_jP_j^\top$ gives 
    \begin{equation} 
    \begin{aligned} 
        - \left\langle \nabla \mathcal{L}_j(\theta_j), D_j \right\rangle &\leq \left| \left\langle \nabla \mathcal{L}_j(\theta_j), \Delta_jP_jP_j^\top \right\rangle \right| \\ &= \left| \left\langle \nabla \mathcal{L}_j(\theta_j)P_j, \Delta_jP_j \right\rangle \right| \\ &\leq \left\| \nabla \mathcal{L}_j(\theta_j)P_j \right\|_F \left\| \Delta_jP_j \right\|_F , 
    \end{aligned} 
    \end{equation} where the last step follows from Cauchy--Schwarz. Moreover, since $P_j$ has orthonormal columns, 
    \begin{equation} 
        \|D_j\|_F = \|\Delta_jP_jP_j^\top\|_F = \|\Delta_jP_j\|_F . \end{equation} 
        Combining the above inequalities yields 
    \begin{equation} 
    \begin{aligned} 
    \mathcal{L}_j(\theta_0+\tilde{\Delta}_j) - \mathcal{L}_j(\theta_0+\Delta_j) &= \mathcal{L}_j(\theta_j-D_j) - \mathcal{L}_j(\theta_j) \\ &\leq \left\| \nabla \mathcal{L}_j(\theta_j)P_j \right\|_F \left\| \Delta_jP_j \right\|_F + \frac{\beta_j}{2} \left\| \Delta_jP_j \right\|_F^2 . 
    \end{aligned} 
    \end{equation}
\end{proof}

\subsection{Proof of Corollary \ref{cor:active_subspace_relaxation}}
\begin{proof}
Let $P_j=V_{j,k}V_{j,k}^{\top}$. By the triangle inequality, \begin{equation} 
\begin{aligned} 
    \left\| \nabla \mathcal{L}_j(\theta_j)P_j \right\|_F &\leq \left\| \nabla \mathcal{L}_j(\theta_j)P_jP_j \right\|_F + \left\| \nabla \mathcal{L}_j(\theta_j)(I-P_j)P_j \right\|_F \\ &\leq \left\| \nabla \mathcal{L}_j(\theta_j) \right\|_F \left\| V_{j,k}^{\top}P_j \right\|_2 + \epsilon_g \left\| \nabla \mathcal{L}_j(\theta_j) \right\|_F \\ &= \left\| \nabla \mathcal{L}_j(\theta_j) \right\|_F \left( \cos(\phi_j)+\epsilon_g \right). 
\end{aligned} 
\end{equation} Similarly, 
\begin{equation} 
    \left\| \Delta_jP_j \right\|_F \leq \left\| \Delta_j \right\|_F \left( \cos(\phi_j)+\epsilon_\Delta \right). 
\end{equation} 
Substituting these two inequalities into Theorem~\ref{thm:projected_performance_preservation} gives Eqn.~\eqref{eq:angle_relaxed_bound}. When $\epsilon_g=\epsilon_\Delta=0$, the first-order term becomes 
\begin{equation} 
  \Delta \mathcal{L}_j^{(1)} \leq \left\| \nabla \mathcal{L}_j(\theta_j) \right\|_F \left\| \Delta_j \right\|_F \cos^2(\phi_j), 
\end{equation}    
\end{proof}

\subsection{Derivation and Proof of the Ablation Approximation}
\label{appendix:proof}

To justify the ablation approximation presented in Eqn.~\ref{ablation_approx}, we present a unified mathematical derivation based on the construction of the LBW subspace. Recall that the task-specific model parameter is decomposed as $\theta_t = \theta_0 + \Delta_t$, and the ablation operation is defined as:
\begin{equation}\label{eq:abla_def}
\theta_{abla} = \theta_t (I - V_{t,LBW}V_{t,LBW}^{\top})
\end{equation}
where the LBW subspace basis $V_{t,LBW} \in \mathbb{R}^{n \times d_{LBW}}$ is constructed via Eqn.\ref{pact1}.

By this construction, $V_{t,LBW}$ lies within the column space of $(I - V_{t,k}V_{t,k}^{\top})V_{0,K}$. Consequently, we can express $V_{t,LBW} = (I - V_{t,k}V_{t,k}^{\top})V_{0,K} R^{-1}$ for some invertible matrix $R$. Left-multiplying this relation by $V_{t,k}^{\top}$ yields:
\begin{align}
V_{t,k}^{\top} V_{t,LBW} &= V_{t,k}^{\top} (I - V_{t,k}V_{t,k}^{\top}) V_{0,K} R^{-1} \nonumber \\
&= (V_{t,k}^{\top} - V_{t,k}^{\top} V_{t,k} V_{t,k}^{\top}) V_{0,K} R^{-1}
\end{align}
Substituting the orthonormality condition $V_{t,k}^{\top} V_{t,k} = I$, the expression simplifies to:
\begin{align}
V_{t,k}^{\top} V_{t,LBW} &= (V_{t,k}^{\top} - I \cdot V_{t,k}^{\top}) V_{0,K} R^{-1} \nonumber \\
&= 0 \cdot V_{0,K} R^{-1} = 0
\end{align}
This establishes that the constructed LBW subspace $V_{t,LBW}$ is orthogonal to the principal task-specific subspace $V_{t,k}$.

Next, we assume that the task-specific parameter update $\Delta_t$ primarily aligns with the subspace spanned by its principal components $V_{t,k}$. This relationship can be formally expressed as the approximation $\Delta_t \approx \Delta_t V_{t,k} V_{t,k}^{\top}$. Projecting the update $\Delta_t$ onto the LBW subspace then yields:
\begin{align}
\Delta_t V_{t,LBW} &\approx \left( \Delta_t V_{t,k} V_{t,k}^{\top} \right) V_{t,LBW} \nonumber \\
&= \Delta_t V_{t,k} \left( V_{t,k}^{\top} V_{t,LBW} \right)
\end{align}
Applying the strict orthogonality $V_{t,k}^{\top} V_{t,LBW} = 0$ derived above, we immediately obtain:
\begin{equation}\label{eq:approx_orth}
\Delta_t V_{t,LBW} \approx 0 \quad \text{and thus} \quad \Delta_t V_{t,LBW} V_{t,LBW}^{\top} \approx 0
\end{equation}

Finally, by substituting the decomposition $\theta_t = \theta_0 + \Delta_t$ into the ablation definition in Eqn.~\ref{eq:abla_def} and expanding the terms, we have:
\begin{align}
\theta_{abla} &= (\theta_0 + \Delta_t) (I - V_{t,LBW}V_{t,LBW}^{\top}) \nonumber \\
&= \theta_0 (I - V_{t,LBW}V_{t,LBW}^{\top}) + \Delta_t - \Delta_t V_{t,LBW} V_{t,LBW}^{\top}
\end{align}
Using the approximation $\Delta_t V_{t,LBW} V_{t,LBW}^{\top} \approx 0$ from Eqn.~\ref{eq:approx_orth}, the expression simplifies to:
\begin{equation}
\theta_{abla} \approx \theta_0 (I - V_{t,LBW} V_{t,LBW}^\top) + \Delta_t
\end{equation}
This completes the derivation, demonstrating that the ablation operation selectively filters out pre-trained components within the LBW subspace while leaving the task-specific updates largely intact.
\section{Computational complexity analysis}
\label{sec:complexity}

In this Section, we analyze the computational complexity of our proposed \texttt{PACT-Iso-C} and \texttt{PACT-TSV-M} merging algorithms, alongside their highly efficient RSVD variants~\citep{halko2011finding}. We compare them with the baseline Iso-C and the SOTA subspace merging methods, TSV-M \citep{gargiulo2025task} and Iso-CTS \citep{marczak2025no}.

Following the conventions established in previous works, let $\Delta_t \in \mathbb{R}^{n \times n}$, and let $T$ and $L$ be the number of tasks and network layers, respectively. For simplicity, assume that each layer consists of a single squared $n \times n$ matrix. 

In our analysis, we focus on the number of Singular Value Decompositions (SVDs) performed by each algorithm, as this is by far the most costly component of each pipeline. The complexity of a single full SVD on $\Delta_t \in \mathbb{R}^{n \times n}$ is $\mathcal{O}(n^3)$ \citep{vasudevan2017hierarchical}. Below, we detail the total computational complexity for each merging method and summarize them in Table~\ref{tab:complexity}:

\begin{itemize}
    \item \textbf{Iso-C} performs a single SVD on the aggregated $\Delta_{TA}$ per layer, with total complexity:
    \begin{equation}
        \mathcal{O}(\text{Iso-C}) = \mathcal{O}(L n^3)
    \end{equation}
    
    \item \textbf{TSV-M} performs:
    \begin{itemize}
        \item $T$ SVDs per layer on each task matrix, yielding $\mathcal{O}(T L n^3)$.
        \item Two additional SVDs per layer for subspace alignment, yielding $\mathcal{O}(2 L n^3)$.
    \end{itemize}
    Yielding the total complexity:
    \begin{equation}
        \mathcal{O}(\text{TSV-M}) = \mathcal{O}(T L n^3 + 2 L n^3) = \mathcal{O}\big((T + 2) L n^3\big) = \mathcal{O}(T L n^3)
    \end{equation}

    \item \textbf{Iso-CTS} performs:
    \begin{itemize}
        \item One SVD on $\Delta_{TA}$ per layer, with complexity $\mathcal{O}(L n^3)$.
        \item One SVD on each $\Delta_t$, for all $T$ tasks per layer, with complexity $\mathcal{O}(T L n^3)$.
        \item Two SVDs on two constructed matrices $U^*, V^* \in \mathbb{R}^{n \times n}$ per layer, yielding $\mathcal{O}(2 L n^3)$.
    \end{itemize}
    Therefore, the total complexity equals:
    \begin{equation}
        \mathcal{O}(\text{Iso-CTS}) = \mathcal{O}(L n^3 + T L n^3 + 2 L n^3) = \mathcal{O}\big((T + 3) L n^3\big) = \mathcal{O}(T L n^3)
    \end{equation}

    \item \texttt{PACT-Iso-C} (Ours) integrates our orthogonal projection filtering with the Iso-C framework. It performs:
    \begin{itemize}
        \item $(T+1)$ SVDs for the \texttt{PACT} filtering phase (one on $\theta_0$ and $T$ on task matrices $\Delta_t$): $\mathcal{O}\big((T+1) L n^3\big)$.
        \item One SVD on the aggregated filtered matrix $\sum \tilde{\Delta}_j$ for the final Iso-C merging: $\mathcal{O}(L n^3)$.
    \end{itemize}
    Yielding the total complexity:
    \begin{equation}
        \mathcal{O}(\text{\texttt{PACT-Iso-C}}) = \mathcal{O}\big((T + 1) L n^3 + L n^3\big) = \mathcal{O}\big((T + 2) L n^3\big) = \mathcal{O}(T L n^3)
    \end{equation}
    *(Note: The QR decompositions required to build the protection spaces have a complexity of $\mathcal{O}(n (TK)^2)$, which is strictly dominated by the $\mathcal{O}(n^3)$ SVD terms since $K \ll n$.)*

    \item \texttt{PACT-TSV-M} (Ours) integrates \texttt{PACT} filtering with TSV-M merging. It performs:
    \begin{itemize}
        \item $(T+1)$ SVDs for the \texttt{PACT} filtering phase: $\mathcal{O}\big((T+1) L n^3\big)$.
        \item $(T+2)$ SVDs applied to the filtered matrices $\tilde{\Delta}_t$ for the TSV-M merging phase: $\mathcal{O}\big((T+2) L n^3\big)$.
    \end{itemize}
    Yielding the total complexity:
    \begin{equation}
        \mathcal{O}(\text{\texttt{PACT-TSV-M}}) = \mathcal{O}\big((T+1) L n^3 + (T+2) L n^3\big) = \mathcal{O}\big((2T + 3) L n^3\big) = \mathcal{O}(T L n^3)
    \end{equation}

    \vspace{0.2cm}
    \noindent \textbf{Efficient Variants with Randomized SVD (RSVD):}
    \\
    Since extracting task-specific features only requires the top-$k$ singular vectors, RSVD can approximate these components using random projections, strictly dropping the decomposition complexity from $\mathcal{O}(n^3)$ to $\mathcal{O}(n^2 k_{max})$, where $k_{max}$ is the maximum rank retained. However, SVDs utilized for final subspace alignment or full aggregation must remain exact.
    \begin{itemize}
        \item \textbf{Efficient TSV-M (RSVD)} replaces the first $T$ task-specific SVDs, but the 2 final alignment SVDs cannot be substituted:
        \begin{equation}
            \mathcal{O}(\text{Efficient TSV-M}) = \mathcal{O}\big(T L n^2 k_{max} + 2 L n^3\big)
        \end{equation}
        \item \textbf{Efficient \texttt{PACT-Iso-C} (RSVD)} replaces all $(T+1)$ SVDs in the \texttt{PACT} phase, leaving only the final Iso-C aggregation SVD exact:
        \begin{equation}
            \mathcal{O}(\text{Efficient \texttt{PACT-Iso-C}}) = \mathcal{O}\big((T + 1) L n^2 k_{max} + L n^3\big)
        \end{equation}
        \item \textbf{Efficient \texttt{PACT-TSV-M} (RSVD)} replaces the $(T+1)$ SVDs in the \texttt{PACT} phase and the $T$ task SVDs in the TSV-M phase, while the 2 final TSV-M alignment SVDs remain exact:
        \begin{equation}
            \mathcal{O}(\text{Efficient \texttt{PACT-TSV-M}}) = \mathcal{O}\big((2T + 1) L n^2 k_{max} + 2 L n^3\big)
        \end{equation}
    \end{itemize}
\end{itemize}

\textbf{Comparison Summary.} As highlighted in the analysis, all exact subspace methods (TSV-M, Iso-CTS, \texttt{PACT-Iso-C}, and \texttt{PACT-TSV-M}) share the same asymptotic complexity scaling of $\mathcal{O}(T L n^3)$. Notably, our exact \texttt{PACT-Iso-C} strictly requires $(T+2)$ full SVDs, posing slightly less overhead than the $(T+3)$ required by Iso-CTS. Exact \texttt{PACT-TSV-M} unsurprisingly carries the highest constant factor with $(2T+3)$ SVDs due to the sequential combination of two distinct algorithms.

More importantly, our \textbf{Efficient RSVD Variants} fundamentally decouple the cubic complexity barrier from the number of tasks $T$. While a small, constant number of full SVDs (e.g., 1 for Iso-C, 2 for TSV-M) remains inescapable for the final aggregation steps, substituting the $T$-dependent operations with $\mathcal{O}(n^2 k_{max})$ practically eliminates the scaling bottleneck. Because the number of active vectors $k_{max}$ is a tiny fraction of the matrix dimension ($k_{max} \ll n$), the computational cost scaling with $T$ is reduced by orders of magnitude. This makes the heavily piped Efficient \texttt{PACT-TSV-M} highly scalable and drastically cheaper to compute than traditional exact merging pipelines on large-scale models.

\begin{table}[htbp]
\centering
\resizebox{\textwidth}{!}{%
\begin{tabular}{lccc}
\toprule
\textbf{Method} & \textbf{\# Full SVDs} ($\mathcal{O}(n^3)$) & \textbf{\# RSVDs} ($\mathcal{O}(n^2 k_{max})$) & \textbf{Total Complexity} \\ 
\midrule
\multicolumn{4}{c}{\textit{Exact Methods (Baselines \& Ours)}} \\
\midrule
Iso-C \citep{marczak2025no} & $1$ & $0$ & $\mathcal{O}(L n^3)$ \\
TSV-M \citep{gargiulo2025task} & $T + 2$ & $0$ & $\mathcal{O}\big((T + 2) L n^3\big)$ \\
Iso-CTS \citep{marczak2025no} & $T + 3$ & $0$ & $\mathcal{O}\big((T + 3) L n^3\big)$ \\
\textbf{PACT-IsoC (Ours)} & $T + 2$ & $0$ & $\mathcal{O}\big((T + 2) L n^3\big)$ \\
\textbf{PACT-TSVM (Ours)} & $2T + 3$ & $0$ & $\mathcal{O}\big((2T + 3) L n^3\big)$ \\
\midrule
\multicolumn{4}{c}{\textit{Efficient Variants with Randomized SVD (RSVD)}} \\
\midrule
Efficient TSV-M & $2$ & $T$ & $\mathcal{O}\big(T L n^2 k_{max} + 2 L n^3\big)$ \\
\textbf{Efficient PACT-IsoC (Ours)} & $1$ & $T + 1$ & $\mathcal{O}\big((T + 1) L n^2 k_{max} + L n^3\big)$ \\
\textbf{Efficient PACT-TSVM (Ours)} & $2$ & $2T + 1$ & $\mathcal{O}\big((2T + 1) L n^2 k_{max} + 2 L n^3\big)$ \\
\bottomrule
\end{tabular}%
}
\caption{Comparison of computational complexity among different merging methods. $T$ and $L$ denote the number of tasks and layers, respectively. $n$ is the dimension of the squared layer matrices, and $k_{max}$ ($k_{max} \ll n$) is the maximum number of vectors retained in RSVD approximations.}
\label{tab:complexity}
\end{table}
\section{Additional Experiments}

\subsection{Extended Analysis on Fisher-level Parameter Collisions}
\label{app:fisher_analysis}

\begin{table}[htbp]
\centering
\begin{tabular}{c|cc>{\columncolor{pink!30}}ccc}
\toprule
$$(A,B)$$ & $M0$ & $M1$ & $M2$ & $M3$ & $M4$ \\
\midrule

(SUN397, Cars) & 81.32 & 81.22 & 58.40 & 77.03 & 73.58\\
(Cars, SUN397) & 86.49 & 86.61 & 5.04 & 77.93 & 70.64 \\

(GTSRB, MNIST) & 99.92 & 99.92 & 28.04 & 93.39 & 90.47 \\
(MNIST, GTSRB) & 99.50 & 99.50 & 39.52 & 99.20 & 96.69 \\

(DTD, RESISC45) & 98.40 & 98.40 & 92.02 & 97.69 & 97.34 \\
(RESISC45, DTD) & 97.04 & 97.09 & 74.94 & 96.38 & 94.94 \\

(Cars, PACM) & 86.49 & 86.36 & 2.70 & 33.37 & 51.76 \\
(PACM, Cars) & 97.86 & 97.84 & 94.05 & 97.76 & 97.80 \\

(Food101, Flower102) & 89.56 & 89.60 & 55.29 & 84.01 & 83.16 \\
(Flower102, Food101) & 97.06 & 97.06 & 62.75 & 93.14 & 90.20 \\

(GTSRB, DTD) & 99.92 & 99.92 & 77.03 & 99.49 & 99.77 \\
(DTD, GTSRB) & 98.40 & 98.40 & 50.71 & 94.50 & 90.60 \\

(SUN397, OxfordIIITPet) & 81.32 & 81.17 & 55.83 & 78.32 & 77.45 \\
(OxfordIIITPet, SUN397) & 94.84 & 94.84 & 51.54 & 93.03 & 91.94 \\

(CIFAR100, RenderedSST2) & 91.08 & 91.12 & 87.40 & 90.25 & 90.06 \\
(RenderedSST2, CIFAR100) & 77.31 & 77.46 & 72.64 & 77.02 & 76.88 \\

\bottomrule
\end{tabular}
\caption{Accuracy (\%) under different parameters modifications. The value of $\alpha$ is set to 30.0.}
\label{tab:attk1_detail}
\end{table}

Our parameter-level perturbation experiments, as detailed in Section~\ref{sec:weight_attc}, are extended here across 16 diverse task pairs $(A,B)$ to substantiate the universality of the LBW hypothesis. Table~\ref{tab:attk1_detail} presents the downstream performance of task $A$ under different modification schemes with an attack strength of $\alpha=30$. We observe a consistent trend across all evaluated pairs:
\paragraph{Severe Degradation under Crucial Attack ($M2$)} Modifying the parameters within the crucial mask ($M2$) leads to a drastic performance drop. For instance, in the (Cars, SUN397) pair, the accuracy of task $A$ (Cars) collapses from a baseline of 86.49\% to 5.04\%. Similarly, for (Cars, PACM), the accuracy drops to 2.70\%. This massive collapse indicates that these small-magnitude, high-Fisher parameters are highly sensitive and hold critical task-specific information.
\paragraph{Robustness under Safe and Random Attacks ($M3$, $M4$)} Conversely, modifying the parameters in the safe mask ($M3$) or the random mask ($M4$) with the same intensity yields considerably milder degradation. For example, under $M3$ and $M4$, the performance on Cars in the (Cars, SUN397) pair remains at 77.93\% and 70.64\%, respectively. These results confirm that the "load-bearing" property is highly localized and successfully captured by our Fisher-based identification method.

To confirm whether model merging interferes with these LBW dimensions, we evaluate the effect of restoring the crucial mask parameters to their pre-trained values $\theta_0$ after a naive merging of task vectors $\theta_0+\Delta_A+\Delta_B$. As shown in Table~\ref{tab:attk2_detail}, across almost all task pairs, simply resetting the crucial mask parameters (which constitute only $0.80\%-2.97\%$ of the total parameters) back to $\theta_0$ yields a measurable performance recovery $\Delta_{acc}$. Specifically, we observe an improvement of $+2.33\%$ for (Cars, PACM) and $+2.13\%$ for (DTD, GTSRB). This empirical recovery demonstrates that the task vector of the counterpart task ($\Delta_B$) indeed introduces destructive interference to the crucial pre-trained weights of task A during naive addition.

\begin{table}[htbp]
\centering
\begin{tabular}{c|ccc>{\columncolor{pink!30}}c}
\toprule
$$(A,B)$$ & crucial mask ratio(\%) & $\theta_0+\Delta_A+\Delta_B$ & $\theta_0$ on crucial  & $\Delta_{acc}$ \\
\midrule
(SUN397, Cars) & 1.28 & 70.97 & 71.78  & \color{red}{0.81} \\
(Cars, SUN397) & 1.52 & 78.62 & 79.36  & \color{red}{0.74} \\

(GTSRB, MNIST) & 2.00 & 91.55 & 93.47 & \color{red}{1.91} \\
(MNIST, GTSRB) & 2.97 & 99.08 & 99.14 & \color{red}{0.06} \\

(DTD, RESISC45) & 1.19 & 94.15 & 94.68 & \color{red}{0.53} \\
(RESISC45, DTD) & 1.48 & 94.29 & 94.60 & \color{red}{0.32} \\

(Cars, PACM) &1.89& 74.69 & 77.03 & \color{red}{2.33} \\
(PACM, Cars) &0.80& 97.50 & 97.46 & \color{blue}{0.04} \\

(Food101, Flower102) &2.20& 84.92 & 85.46 & \color{red}{0.54} \\
(Flower102, Food101) &1.12& 93.14 & 93.14 & 0.00 \\

(GTSRB, DTD) & 1.91 & 99.89 & 99.89 & 0.00 \\
(DTD, GTSRB) & 2.17 & 92.55 & 94.68 & \color{red}{2.13} \\

(SUN397, OxfordIIITPet) & 1.28 & 76.38 & 76.88 & \color{red}{0.50} \\
(OxfordIIITPet, SUN397) & 1.87 & 92.66 & 92.66 & 0.00 \\

(CIFAR100, RenderedSST2) &0.92& 89.02 & 88.98 & \color{blue}{0.04} \\
(RenderedSST2, CIFAR100) &1.26& 76.16 & 76.01 & \color{blue}{0.14} \\

\bottomrule
\end{tabular}
\caption{$\Delta_{acc}$ represents the accuracy (\%) change due to the crucial mask replacement, with improvement in red and decrease in blue.}
\label{tab:attk2_detail}
\end{table}

To thoroughly evaluate the sensitivity of the identified load-bearing parameters, we visualize the downstream accuracy of task A under varying attack strengths $\alpha\in[1.0,5.0,10.0,20.0,30.0]$ across all 15 evaluated task pairs in Figure~\ref{fig:15images}.
The curves show a systematic and consistent divergence. Across all 16 pairs, the $M2$ curve (representing the targeted attack on the crucial mask) drops precipitously as the attack strength $\alpha$ increases. In contrast, the $M3$ (safe mask) and $M4$ (random mask) curves exhibit a much gentler decline, remaining relatively stable even under high perturbation levels. This uniform behavior across diverse task combinations suggests that the load-bearing wall phenomenon is not an artifact of a specific dataset pair, but rather a structural characteristic inherent to pre-trained vision models.

\subsection{Extended Geometric Subspace Analysis and Layer-wise Heterogeneity}
\label{app:geometric_subspace}
This subsection provides the complete, expanded experimental results for the global subspace ablation analysis introduced in Section~\ref{sec:subspace_analysis} of the main text. While Table~\ref{tab:attk3} in Section~\ref{sec:subspace_analysis} presents a concise subset of six representative tasks due to space constraints, we extend this evaluation here to the full set of 19 diverse downstream datasets on the ViT-B-16 model, as summarized in Table~\ref{tab:attk3_detail}. The results consistently indicate that surgically removing the low-dimensional $V_{t,LBW}$ subspace (which represents on average only $1.17\%$ of the total parameter dimensions) leads to a severe degradation of downstream performance across all 18 tasks. For instance, CIFAR100 accuracy falls from $91.08\%$ to $1.88\%$, and Flowers102 drops from $97.06\%$ to $0.98\%$. In contrast, ablating a random orthogonal control subspace of the identical dimensionality ($V_{rand}$) causes negligible performance changes, retaining $88.15\%$ and $91.18\%$ accuracy respectively. The ratio of the resulting performance drops ($\Delta_{acc}^{LBW} / \Delta_{acc}^{rand}$) ranges from $13.75\times$ to over $3104\times$. This notable asymmetry provides supportive empirical evidence indicating that these low-rank pre-trained subspaces are highly critical for sustaining downstream capabilities.
\begin{figure*}
\centering

\subfloat{\includegraphics[width=0.245\textwidth]{fig/fisher/_Cars_SUN397_.pdf}}%
\subfloat{\includegraphics[width=0.245\textwidth]{fig/fisher/_SUN397_Cars_.pdf}}%
\hfill
\subfloat{\includegraphics[width=0.245\textwidth]{fig/fisher/_GTSRB_MNIST_.pdf}}%
\hfill
\subfloat{\includegraphics[width=0.245\textwidth]{fig/fisher/_MNIST_GTSRB_.pdf}} \\

\subfloat{\includegraphics[width=0.245\textwidth]{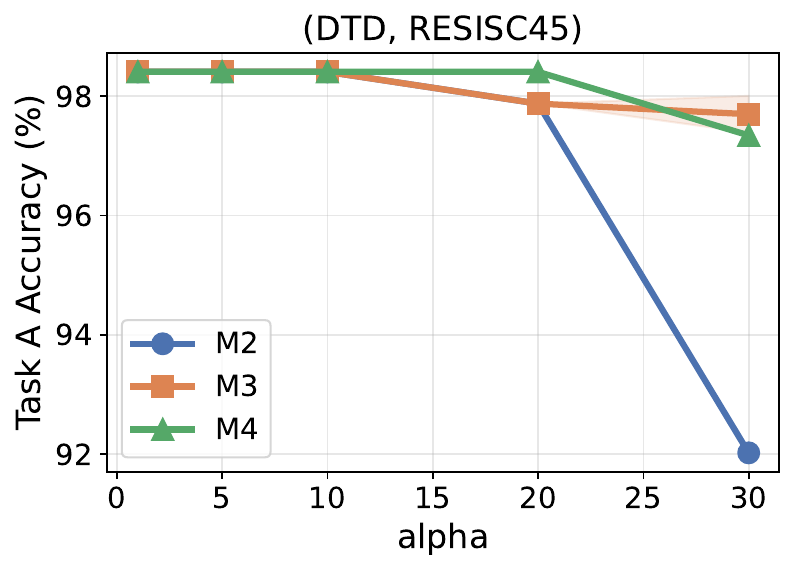}}%
\hfill
\subfloat{\includegraphics[width=0.245\textwidth]{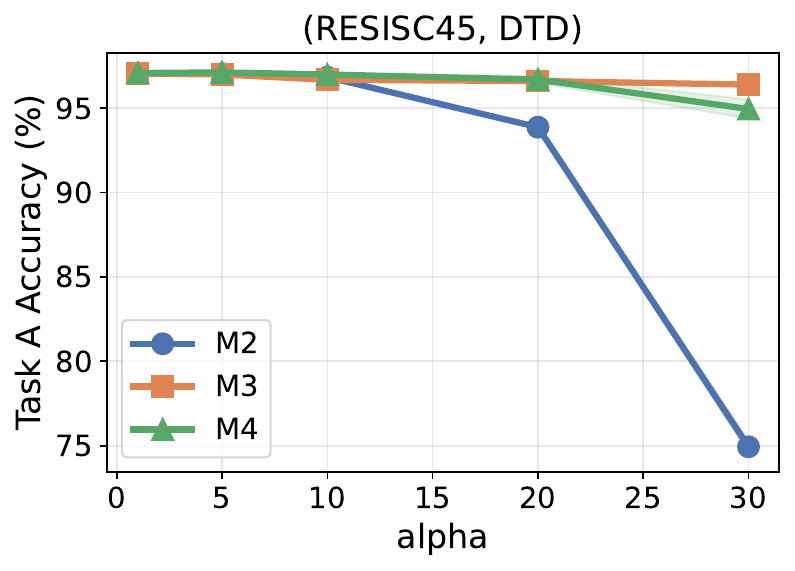}} %
\hfill
\subfloat{\includegraphics[width=0.245\textwidth]{fig/fisher/_Cars_PACM_.pdf}} 
\hfill
\subfloat{\includegraphics[width=0.245\textwidth]{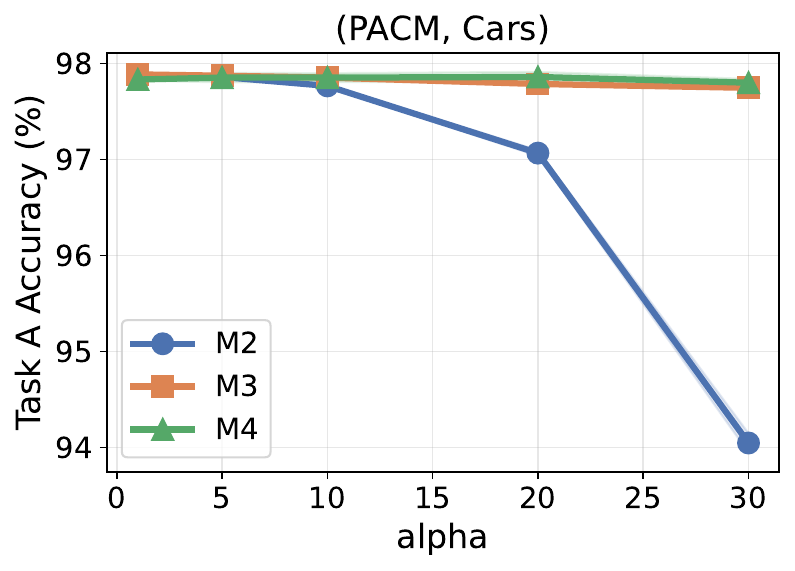}}\\

\subfloat{\includegraphics[width=0.245\textwidth]{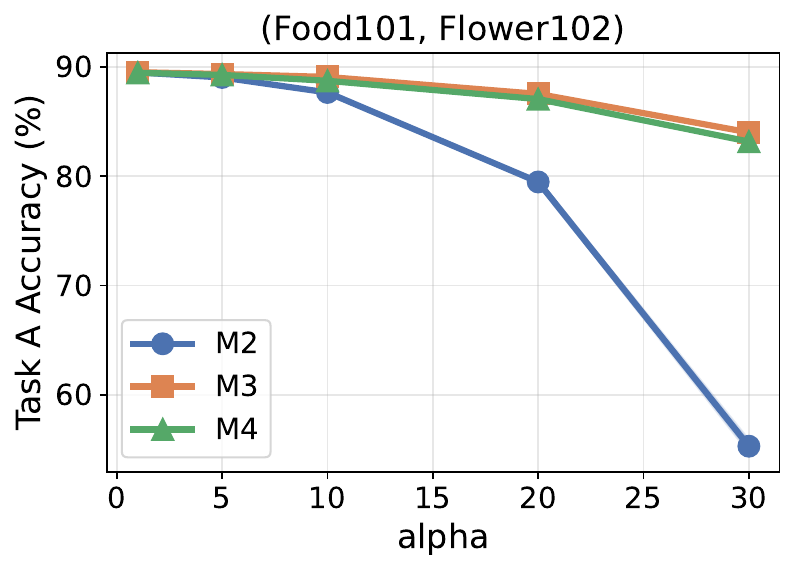}}%
\hfill
\subfloat{\includegraphics[width=0.245\textwidth]{fig/fisher/_Flower102_Food101_.pdf}} 
\hfill
\subfloat{\includegraphics[width=0.245\textwidth]{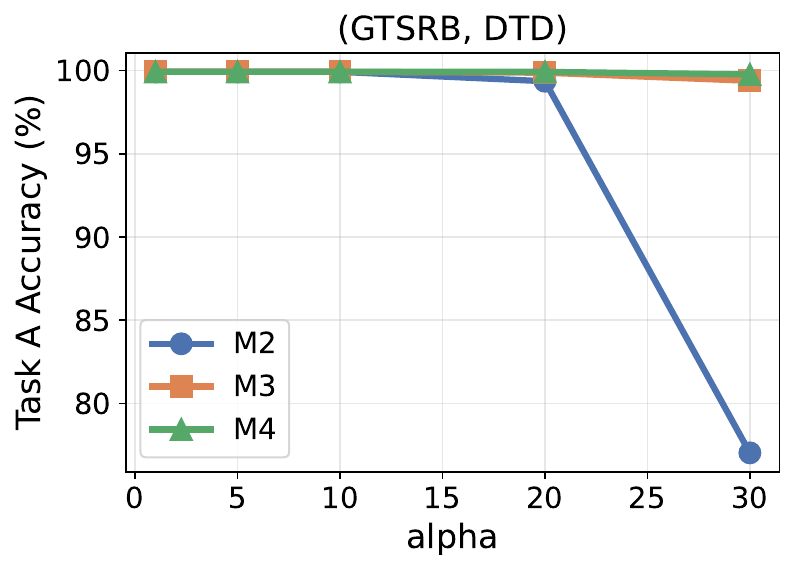}} 
\hfill
\subfloat{\includegraphics[width=0.245\textwidth]{fig/fisher/_DTD_GTSRB_.pdf}}  \\

\subfloat{\includegraphics[width=0.245\textwidth]{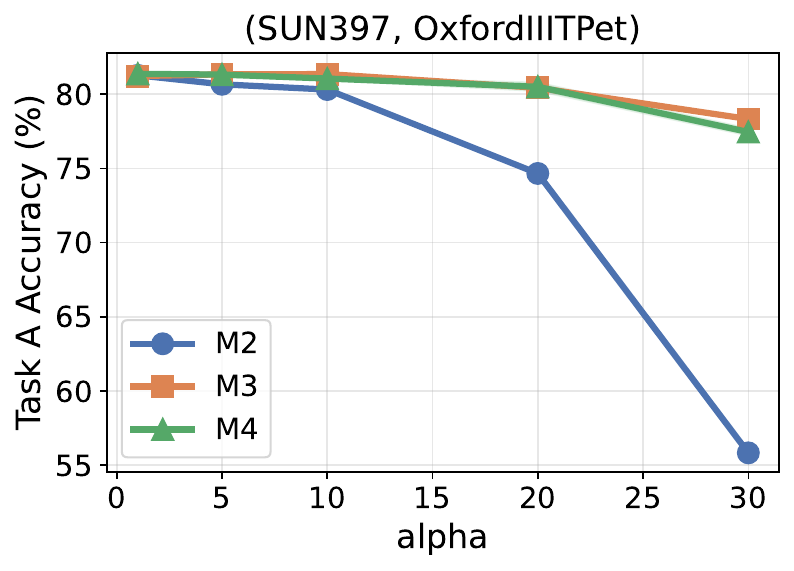}} 
\hfill
\subfloat{\includegraphics[width=0.245\textwidth]{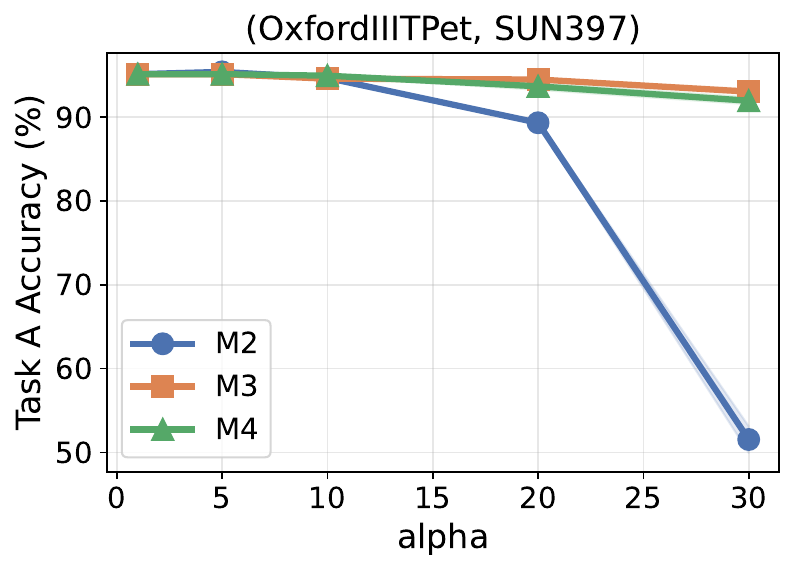}}%
\hfill
\subfloat{\includegraphics[width=0.245\textwidth]{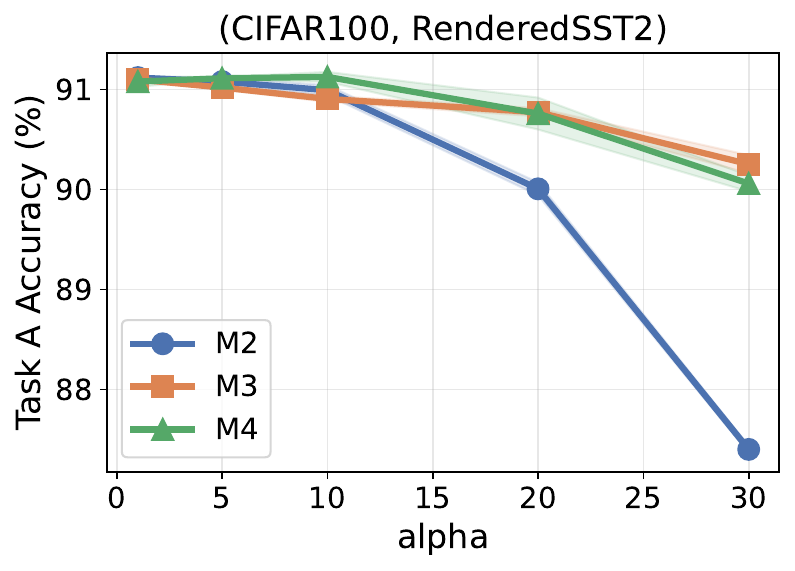}}%
\hfill
\subfloat{\includegraphics[width=0.245\textwidth]{fig/fisher/_RenderedSST2_CIFAR100_.pdf}} \\

\caption{Accuracy (\%) under varying attack strengths with respective to different masks across 16 task pairs. Generally, $M2$ is could be easily attacked while $M3$ and $M4$ are rather stable, indicating the importance of parameters within crucial mask.} 
\label{fig:15images}
\end{figure*}

To analyze the underlying geometric conflicts causing these performance drops, we supplement the main text experiments by evaluating the pairwise Sacred Space Similarity ($Sim$, Eqn.~\ref{sim_metric}) and Hidden Interference Ratio ($Inf$, Eqn.~\ref{sim_metric}). When these metrics are averaged globally across all network layers (Figures \ref{fig:global_avg_sim} and \ref{fig:global_avg_interf}), task similarities appear deceptively high ($>0.91$), and the pairwise interference appears mild ($4\%-11\%$). 

However, we find that evaluating these metrics globally hides severe local conflicts due to a distinct, non-uniform layer-wise distribution. When we rank the layers and isolate the top-6 worst-affected layers (Figures \ref{fig:top6_avg_sim} and \ref{fig:top6_avg_interf}), a starkly different geometric landscape emerges. In these critical foundational layers, the similarity ($Sim$) drops significantly to the $0.75-0.87$ range, and the Hidden Interference ($Inf$) skyrockets, indicating that explicit updates from one task can overwrite up to $25\%$ of the implicit pre-trained dependencies of another (e.g., SUN397 and Cars invading SVHN's space). 

\begin{table}[htbp]
\centering
\begin{tabular}{c|ccc>{\columncolor{pink!30}}c}
\toprule
Task t& baseline $\theta_t$ & $\theta_{abla}\ (\Delta_{acc}^{LBW})$ & $\theta_{rand}\ (\Delta_{acc}^{rand})$ & $\Delta_{acc}^{LBW}/\Delta_{acc}^{rand}$ \\
\midrule
CIFAR100 & 91.08 & 1.88 (89.20) & 88.15 (2.93) & 30.48$\times$\\

CIFAR10 & 98.42 & 10.82 (87.60) & 97.89 (0.53) & 166.33$\times$\\

Cars & 86.49 & 0.37 (86.12) & 80.22 (6.27) & 13.75$\times$\\

DTD & 98.40 & 3.19 (95.21) & 98.05 (0.35) & 268.50$\times$\\

EMNIST & 99.78 & 17.00 (82.78) & 99.75 (0.03) & 3104.25$\times$\\

FER2013 & 44.43 & 16.34 (28.08) & 43.47 (0.95) & 29.49$\times$\\

FashionMNIST & 95.28 & 9.78 (85.50) & 95.14 (0.14) & 610.71$\times$\\

Flowers102 & 97.06 & 0.98 (96.08) & 91.18 (5.88) & 16.33$\times$\\

Food101 & 89.56 & 0.96 (88.60) & 85.99 (3.57) & 24.84$\times$\\

GTSRB & 99.92 & 1.91 (98.01) & 99.86 (0.06) & 1566.60$\times$\\

KMNIST & 99.94 & 11.92 (88.02) & 99.88 (0.06) & 1467.00$\times$\\

OxfordIIITPet & 94.84 & 2.17 (92.66) & 93.75 (1.09) & 85.25$\times$\\

PCAM & 97.86 & 49.10 (48.76) & 96.76 (1.10) & 44.33$\times$\\

RESISC45 & 97.04 & 4.18 (92.86) & 95.98 (1.06) & 87.75$\times$\\

RenderedSST2 & 77.31 & 54.91 (22.40) & 75.63 (1.69) & 13.29$\times$\\

STL10 & 99.60 & 11.00 (88.60) & 99.33 (0.27) & 332.25$\times$\\

SUN397 & 81.32 & 0.56 (80.77) & 76.19 (5.13) & 15.74$\times$\\

SVHN & 96.76 & 7.40 (89.36) & 96.51 (0.25) & 352.74$\times$\\
\bottomrule
\end{tabular}
\caption{$\Delta_{acc}^{*}$ represents the accuracy (\%) change due to the subspace filtering. The random subspace filtering is averaged across three random seeds. The numbers in parentheses indicate the decrease in accuracy compared to the $\theta_t$.}
\label{tab:attk3_detail}
\end{table}

This sharp contrast reveals that the degree of pairwise task conflict is highly dependent on layer depth, with early foundational layers absorbing severe, high-density interference. This pronounced layer-wise variation suggests that a uniform, global merging approach is insufficient, motivating a dedicated investigation into how these conflicts are distributed layer-by-layer across the pre-trained architecture.
\begin{figure}[htbp]
   \centering
   \begin{minipage}{0.48\linewidth}
       \centering
       \includegraphics[width=0.98\linewidth]{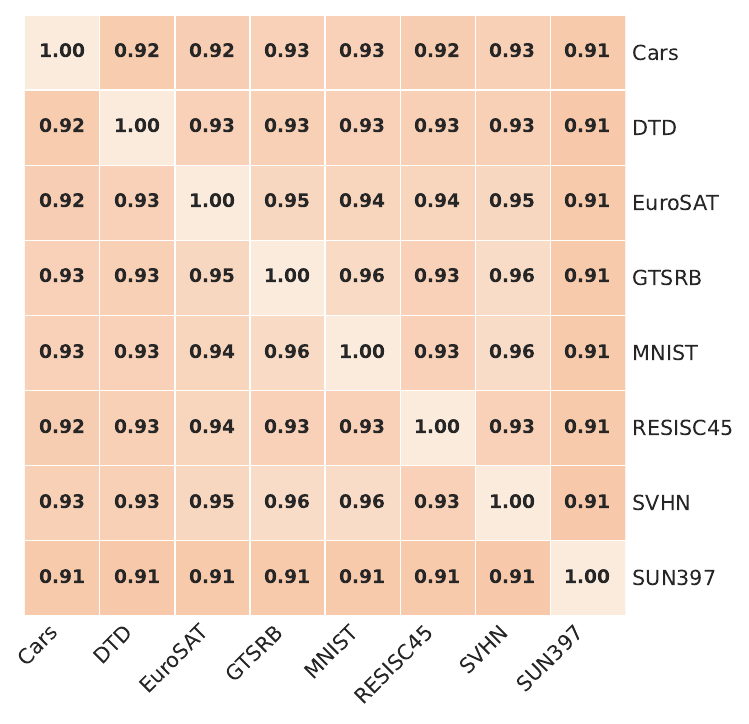}
       \caption{Sim (Global Avg)}
       \label{fig:global_avg_sim}
   \end{minipage}
   \hfill
   \begin{minipage}{0.48\linewidth}
       \centering
       \includegraphics[width=0.98\linewidth]{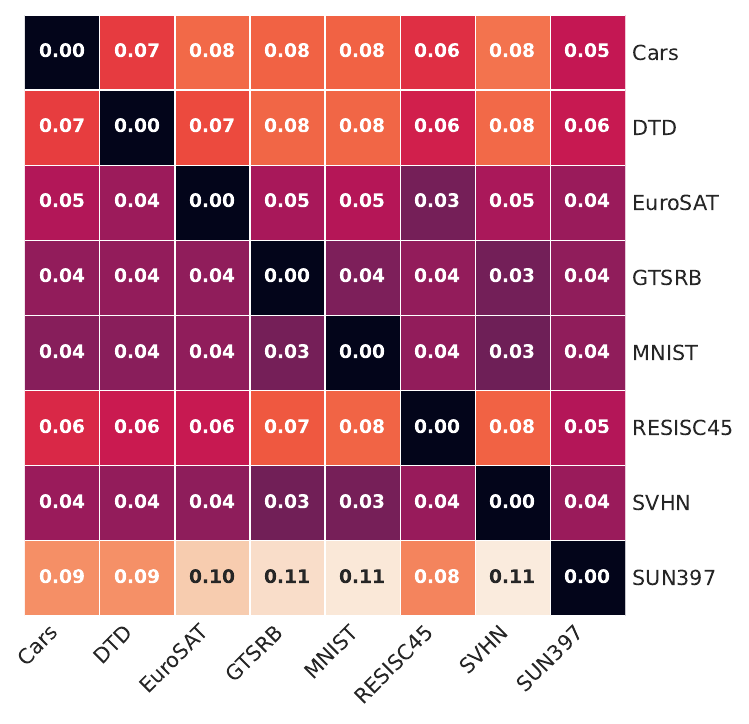}
       \caption{Interf (Global Avg)}
       \label{fig:global_avg_interf}
   \end{minipage}
   \\ \vspace{1em}
   \begin{minipage}{0.48\linewidth}
       \centering
       \includegraphics[width=0.98\linewidth]{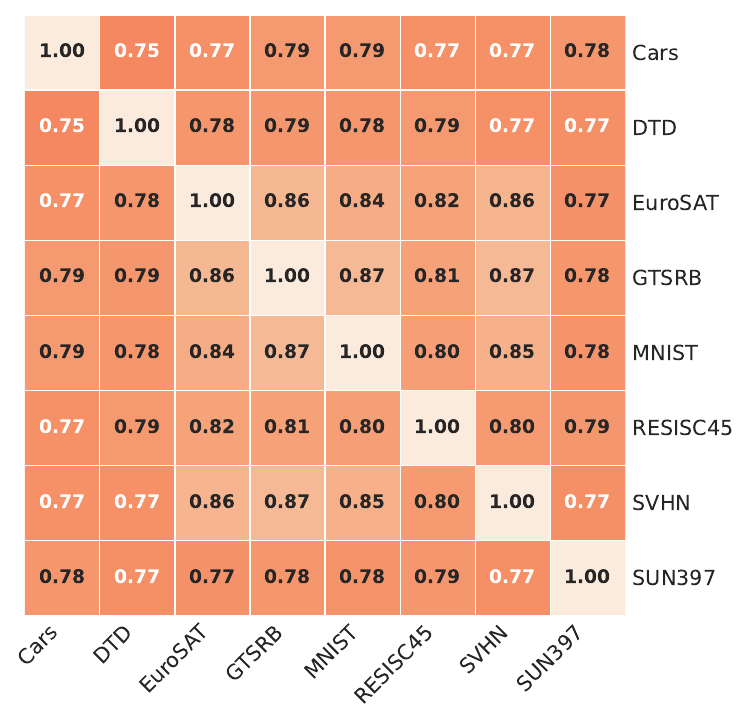}
       \caption{Sim (Top-6 Layers)}
       \label{fig:top6_avg_sim}
   \end{minipage}
   \hfill
   \begin{minipage}{0.48\linewidth}
       \centering
       \includegraphics[width=0.98\linewidth]{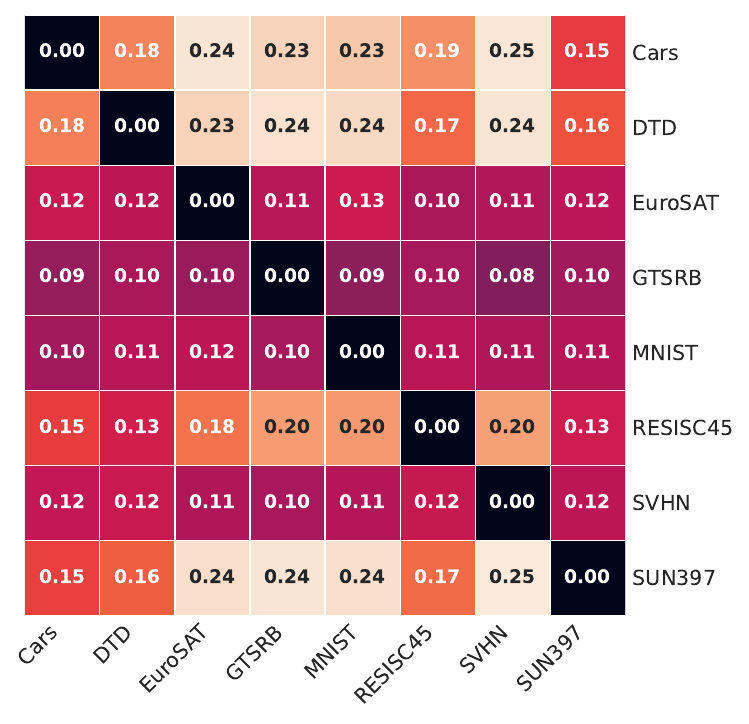}
       \caption{Interf (Top-6 Layers)}
       \label{fig:top6_avg_interf}
   \end{minipage}
   \caption{Comparison of global averages versus the top-6 worst-affected layers for Sacred Space Similarity (Sim) and Hidden Interference Ratio (Inf).}
   \label{fig:averaging_illusion_quad}
\end{figure}

\subsection{Layer-wise and Component-wise Analysis of Pre-trained Subspace Intrusion}
\label{app:layer_wise_deep_dive}

To systematically investigate how task updates interact with the pre-trained architecture under a subspace geometric view, we conduct a layer-wise analysis. This investigation requires a metric that directly quantifies how task-specific updates modify the shared, pre-trained representation space, rather than measuring pairwise task-to-task relationships. To this end, motivated by the projection strength metrics~\citep{saha2021gradient} and subspace alignment formulations~\citep{fernando2013unsupervised}, we define the \textbf{Intrusion Energy ($E_{in}$)} for a specific task $t$ at layer $\ell$ as:
\begin{align}\label{energy}
    E_{in}(t,\ell)=\frac{\|\Delta_tV_{0,K}\|_F^2}{\|\Delta_t\|_F^2}
\end{align}
where $V_{0,K} \in \mathbb{R}^{n \times K}$ represents the top-$K$ right singular vectors of the pre-trained weights $\theta_0$, capturing the core coordinates of general pre-trained knowledge. The objective of $E_{in}$ is to measure the exact proportion of a task's explicit update energy ($\Delta_t$) that projects directly onto this shared pre-trained core. A higher $E_{in}$ indicates that fine-tuning has aggressively reshaped the foundational pre-trained representation, creating a high-energy "intrusion" that is vulnerable to overwrite when merging.

\begin{figure}[htbp]
   \centering
   \includegraphics[width=0.75\linewidth]{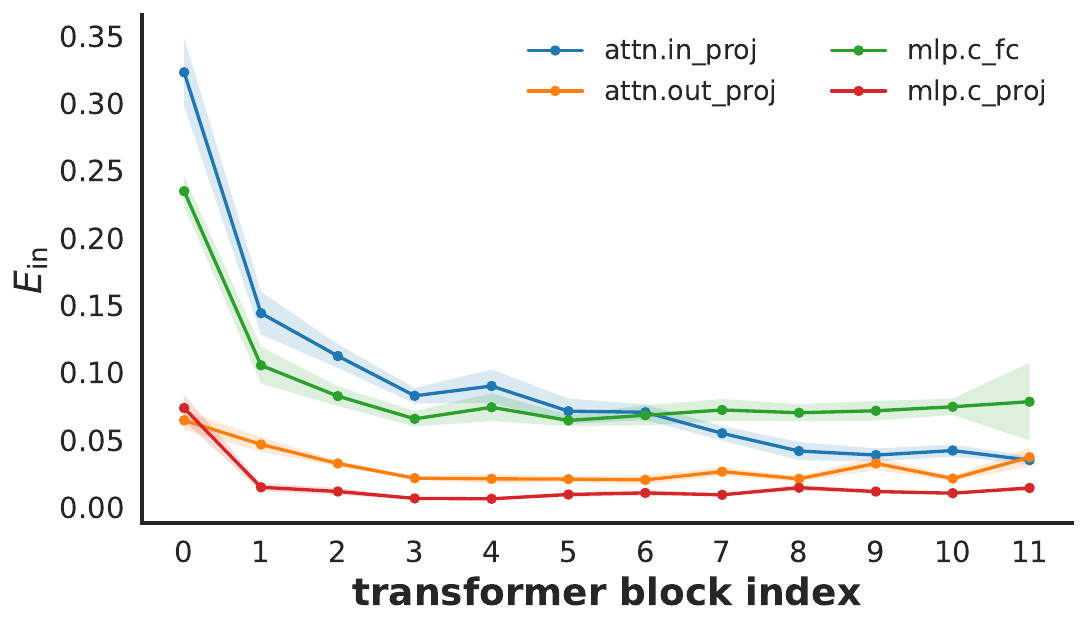} 
   \caption{Layer-wise Intrusion Energy $E_{in}$ across the transformer blocks of ViT-B-16.} 
   \label{fig:layer-wise energy} 
\end{figure}

\subsubsection{Global Layer-wise Intrusion Dynamics}
Figure \ref{fig:layer-wise energy} plots the average $E_{in}$ across the transformer blocks of ViT-B-16, revealing a clear layer-wise heterogeneity:
\paragraph{Foundational Bottlenecks in Early Blocks.} In early blocks (blocks 0 to 2), which extract fundamental visual representations (e.g., textures, edges), $E_{in}$ is heavily elevated. For instance, the intrusion energy in the input projection layers exceeds $30\%$. This indicates that downstream specialization forces task updates to aggressively modify these early representation bottlenecks, rendering them highly sensitive to cross-task interference.
\paragraph{Orthogonal Routing in Deeper Blocks:} Conversely, in deeper layers (blocks 5 to 11), $E_{in}$ drops significantly and plateaus below $5\%$. This confirms that deeper layers safely route high-level semantic updates into task-specific orthogonal subspaces with minimal perturbation of the core general-purpose representations.

This layer-wise distribution highlights the risk of ``the averaging illusion'': a $48\%$ structural corruption at foundational block 0 cannot be resolved by deeper blocks. Treating all layers uniformly allows high-energy foundational collisions to proceed unchecked, leading to downstream performance degradation.

\begin{figure}[htbp]
   \centering
   \includegraphics[width=0.9\linewidth]{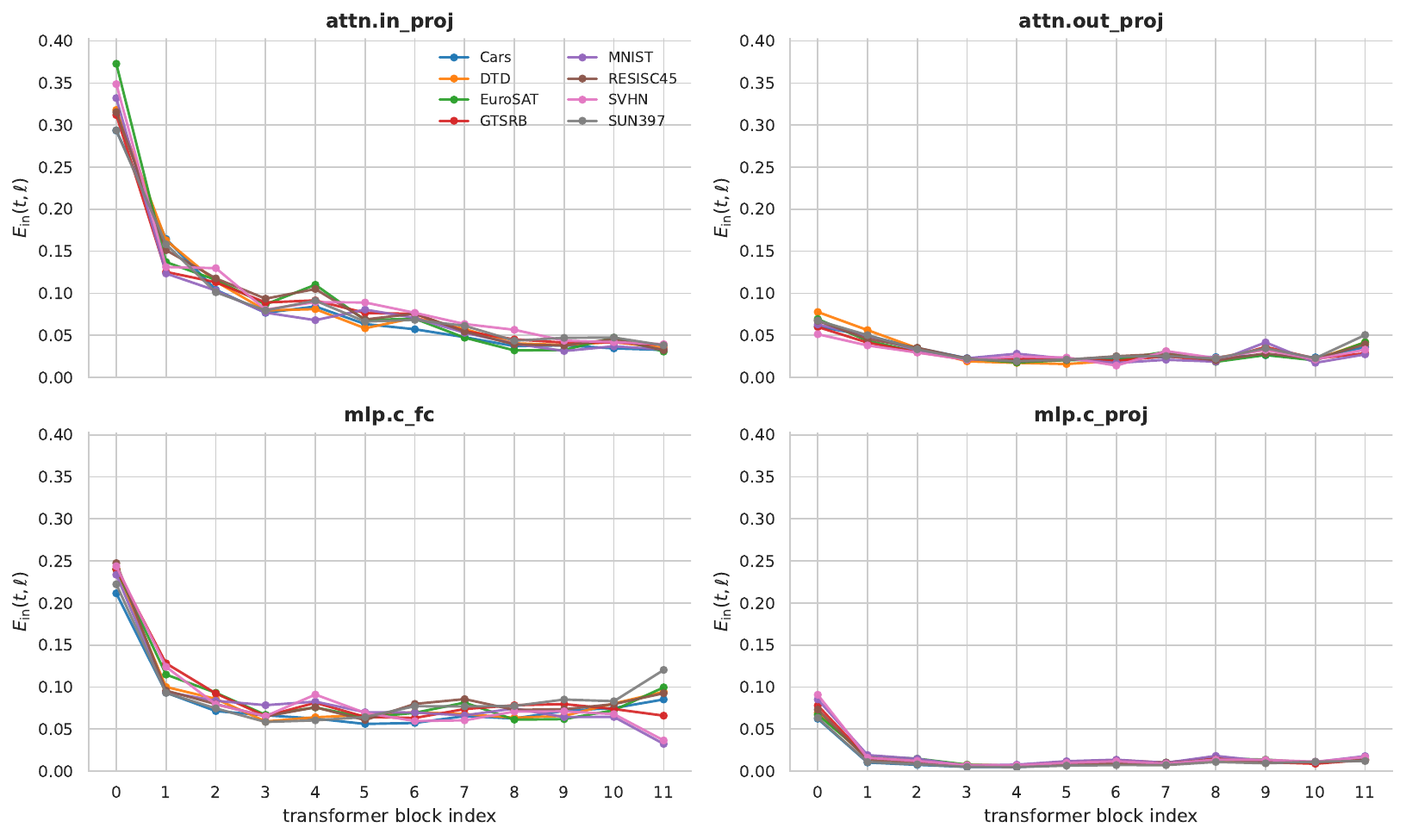} 
   \caption{Task-specific and Component-wise Intrusion Energy $E_{in}$ across different transformer components.} 
   \label{fig:ein_task_component_line} 
\end{figure}

\subsubsection{Component-wise and Task-wise Intrusion Dynamics}
To evaluate these dynamics at a more granular scale, Figure \ref{fig:ein_task_component_line} decomposes $E_{in}$ across individual tasks and specific transformer components (\texttt{attn.in\_proj}, \texttt{attn.out\_proj}, \texttt{mlp.c\_fc}, and \texttt{mlp.c\_proj}). 

We observe a strong component-wise heterogeneity. The input-facing projection layers (\texttt{attn.in\_proj} and \texttt{mlp.c\_fc}) absorb nearly $40\%$ of the update energy at block 0, acting as the primary bottleneck of task intrusion. In contrast, the output projection layers (\texttt{attn.out\_proj} and \texttt{mlp.c\_proj}) remain flat and low (mostly below $5\%$), confirming they act primarily as feature routers that do not perturb the shared core. Importantly, the tightly clustered curves across 8 diverse downstream datasets demonstrate that this early-layer collision is a universal geometric characteristic of ViT fine-tuning rather than an artifact of a specific task.

\begin{figure}[htbp]
   \centering
   \includegraphics[width=0.85\linewidth]{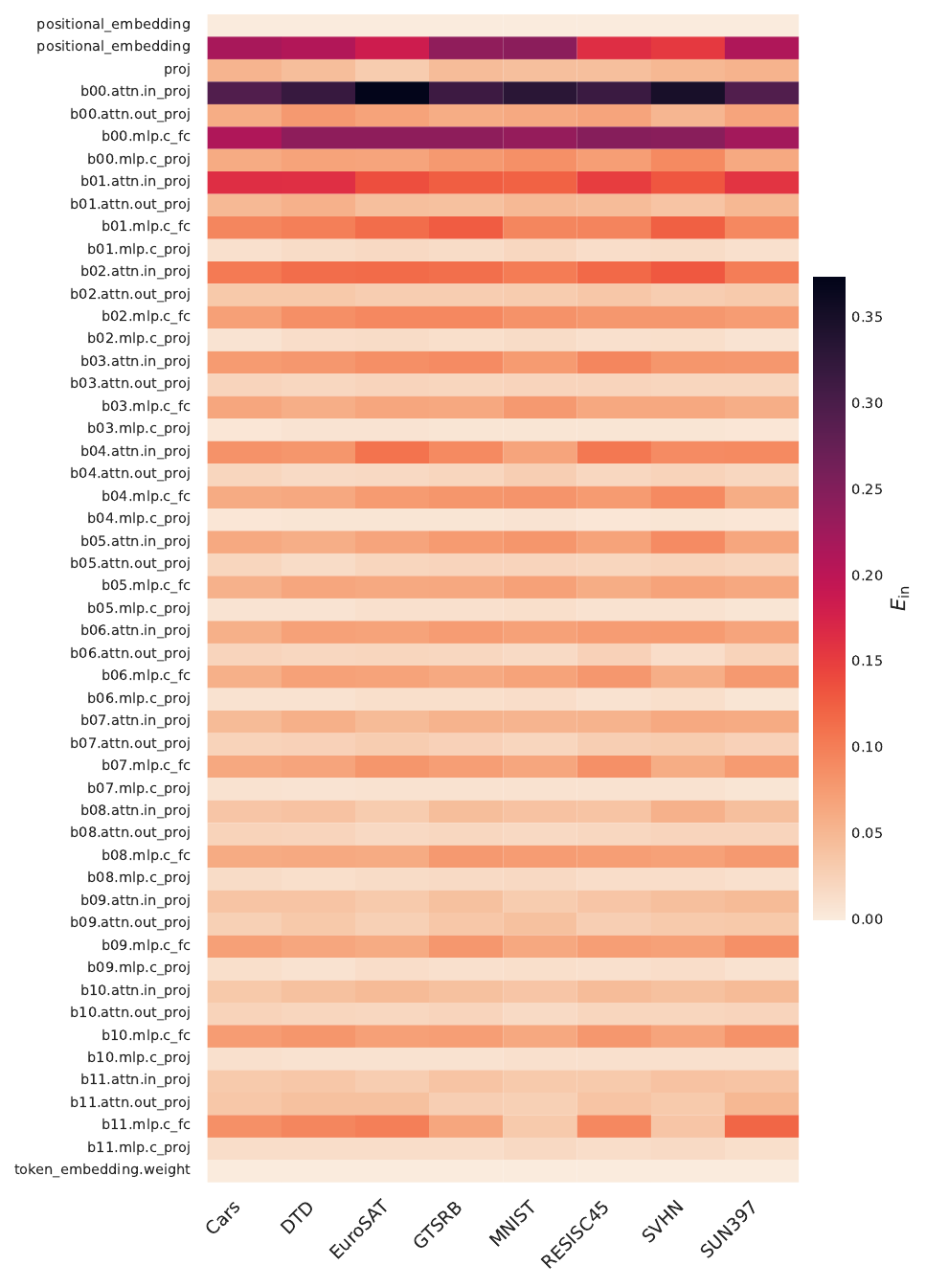} 
   \caption{Comprehensive heatmap of Intrusion Energy ($E_{in}$) across all layers and tasks.} 
   \label{fig:ein_full_heatmap} 
\end{figure}

This universality is further illustrated in the exhaustive heatmap in Figure \ref{fig:ein_full_heatmap}. Across all evaluated tasks, the earliest blocks (e.g., \texttt{b00.attn.in\_proj}, \texttt{b00.mlp.c\_fc}) manifest as prominent dark bands of intense, high-energy intrusion into the pre-trained space. Beyond block 4, the heatmap transitions consistently to light hues, confirming that the pre-trained core remains undisturbed in deeper blocks.

\begin{figure}[htbp]
   \centering
   \includegraphics[width=1.0\linewidth]{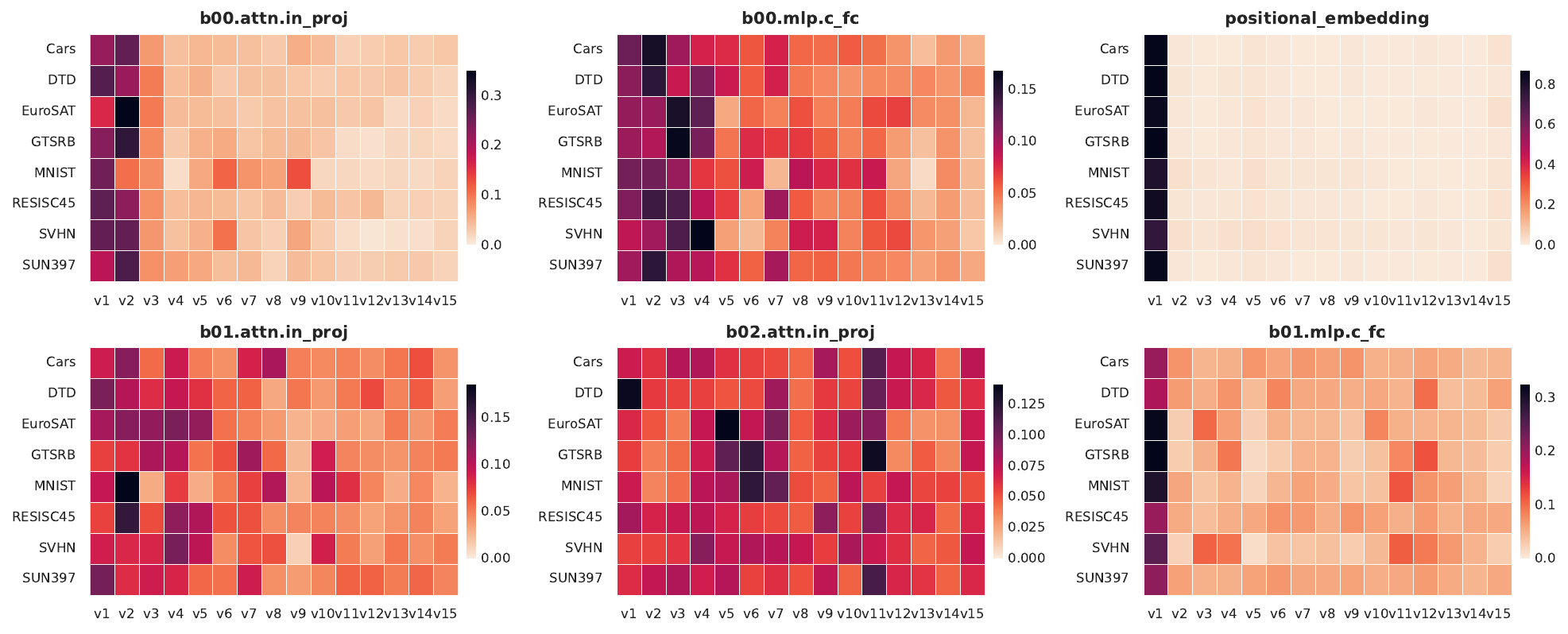} 
   \caption{Fractional Intrusion Energy distributed across individual right singular vectors ($v_1$ to $v_{15}$) for select top-affected layers.} 
   \label{fig:singular_vector_dist} 
\end{figure}

\subsubsection{Microscopic Intrusion Distribution Along Principal Axes}
To understand the geometric mechanics of this intrusion at the finest scale, we analyze which specific singular vectors within the pre-trained core subspace ($K=15$) are affected. Figure \ref{fig:singular_vector_dist} visualizes the fractional intrusion energy distributed across the individual right singular vectors ($v_1$ to $v_{15}$) for the most heavily affected layers. 

The visualization reveals that task updates do not uniformly perturb the core space. Instead, they target specific principal axes in a highly task-dependent manner. For example, in the foundational layer \texttt{b00.mlp.c\_fc}, SVHN heavily modifies $v_4$, whereas EuroSAT targets $v_3$. Additionally, in the \texttt{positional\_embedding} layer, the intrusion across all tasks is almost exclusively concentrated on the single most dominant vector ($v_1$), indicating a universal shift in positional priors during downstream specialization. 

This vector-wise heterogeneity highlights the necessity of \texttt{PACT}'s design. Because different tasks rely on and explicitly attack different combinations of principal axes, simply discarding a fixed, task-agnostic set of singular vectors is insufficient. The adaptive, QR-decomposition-based global shield implemented in \texttt{PACT} is essential to dynamically map and protect these highly heterogeneous, task-specific attack vectors.
\section{Detailed Ablation Studies}
\label{app:ablation_studies}

To systematically evaluate the contribution of each design component in \texttt{PACT}, we conduct extensive ablation studies. This section is organized into three parts: (1) an algorithmic component ablation comparing \texttt{PACT} against a static core-filtering baseline; (2) a layer-wise depth ablation exploring the performance of localized filtering; and (3) a sensitivity analysis of the hyper-parameters.

\subsection{Algorithmic Component Ablation: Dynamic Task Shields vs. Static \texttt{PACT} (S-\texttt{PACT})}
\label{app:algorithmic_ablation}

To verify the necessity of dynamically extracting the active task subspaces (the top-$k$ task-specific SVD) and constructing the mutual protection shields, we compare \texttt{PACT} against a simplified, static baseline which we term S-\texttt{PACT} (Static \texttt{PACT}). 

S-\texttt{PACT} bypasses the task-vector SVD and mutual projection phases entirely. Instead, it only performs a single SVD on the pre-trained weight matrix $\theta_0$ to extract the top-$K$ core directions $V_{0,K}$, and then directly projects each task matrix onto its orthogonal complement: 
\begin{align}
    \tilde{\Delta}_j = \Delta_j (I - V_{0,K} V_{0,K}^\top)
\end{align}
This baseline effectively filters task updates against the pre-trained foundation but ignores the distinct, task-specific subspaces of other expert models. S-\texttt{PACT} is computationally cheaper as it requires only one SVD per layer and no subsequent QR decompositions.

\begin{table}[htbp]
\centering
\scriptsize   
\setlength{\tabcolsep}{4pt}

\begin{tabular}{lccccccccc}
\toprule
\multirow{2}{*}{Method} & \multicolumn{3}{c}{ViT-B/32} & \multicolumn{3}{c}{ViT-B/16} & \multicolumn{3}{c}{ViT-L/14} \\
\cmidrule(lr){2-4} \cmidrule(lr){5-7} \cmidrule(lr){8-10}
 & 8 tasks & 14 tasks & 20 tasks & 8 tasks & 14 tasks & 20 tasks & 8 tasks & 14 tasks & 20 tasks \\

\midrule
TA  & 70.8(76.5) & 65.3(72.1) & 60.5(66.8) & 75.4(79.6) & 70.5(75.9) & 65.8(70.8) & 84.9(88.7) & 79.4(84.0) & 74.0(78.1) \\
S-\texttt{PACT-TA}  & 70.8(76.3) & 65.8(72.7) & 61.7(68.3) & 76.2(80.5) & 71.2(76.7) & 66.4(71.6) & 85.0(88.7) & 80.3(85.1) & 74.7(78.9) \\
\texttt{PACT-TA}   & 75.2(81.0) & 66.4(73.3) & 59.9(66.4) & 85.5(90.3) & 76.8(82.7) & 68.7(74.0) & 92.8(96.9) & 87.0(92.0) & 83.0(87.7) \\

\midrule
Iso-C & 86.6(93.3) & 79.5(87.3) & 75.1(82.2) & 90.8(96.0) & 84.4(90.7) & 79.7(85.4) & 94.7(98.9) & 89.2(94.3) & 87.7(92.4) \\
S-\texttt{PACT-Iso-C} & 86.5(93.2) & 79.5(87.2) & 74.8(81.7) & 90.8(96.0) & 84.2(90.4) & 79.7(85.6) & 94.7(98.9) & 89.3(94.4) & 87.7(92.4) \\
\texttt{PACT-Iso-C}   & 87.3(94.1) & 81.3(89.0) & 78.7(86.0) & 92.0(97.2) & 86.1(92.3) & 83.4(89.3) & 95.1(98.3) & 90.5(95.5) & 90.3(95.0) \\

\bottomrule
\end{tabular}
\caption{Algorithmic ablation study comparing \texttt{PACT} against the core-only S-\texttt{PACT} baseline across different model backbones. We present average absolute accuracy and average normalized accuracy (in bracket) in $\%$.}
\label{tab:s-pact_ablation}
\end{table}

Table \ref{tab:s-pact_ablation} summarizes the merging performance of S-\texttt{PACT} against TA, Iso-C, and full \texttt{PACT}. 
The empirical results reveal that merely performing static core-filtering (S-\texttt{PACT}) yields limited performance improvements. For instance, on ViT-B/16 (14 tasks), S-PACT-TA only marginally improves over TA ($71.2\%$ vs. $70.5\%$), while S-\texttt{PACT-Iso-C} slightly degrades the original Iso-C baseline ($84.2\%$ vs. $84.4\%$). 

In contrast, \texttt{PACT}-TA and \texttt{PACT-Iso-C} achieve substantial performance improvements, reaching $76.8\%$ and $86.1\%$ respectively on the same setting. This difference indicates that simply protecting the static pre-trained core is insufficient. To prevent task updates from blindly overwriting each other's distinct downstream manifolds, it is critical to dynamically extract the active task dimensions and construct mutually orthogonal protection shields.

\subsection{Layer-wise Depth Ablation: Localized Filtering for Efficiency}
\label{app:layer_ablation}

In Appendix~\ref{app:layer_wise_deep_dive}, we demonstrated that the energy intrusion effect is highly localized within the early foundational layers of the model, whereas deeper layers route semantics in mutually orthogonal directions. Based on this observation, we perform an ablation on the depth of layers filtered by \texttt{PACT} to identify potential efficiency-performance trade-offs. We compare applying \texttt{PACT} globally to $100\%$ of the layers versus applying it to various percentages of early foundational layers.

\begin{table}[htbp]
\centering
\scriptsize   
\setlength{\tabcolsep}{4pt}
\begin{tabular}{lccccccccc}
\toprule
\multirow{2}{*}{Method} & \multicolumn{3}{c}{ViT-B/32} & \multicolumn{3}{c}{ViT-B/16} & \multicolumn{3}{c}{ViT-L/14} \\
\cmidrule(lr){2-4} \cmidrule(lr){5-7} \cmidrule(lr){8-10}
 & 8 tasks & 14 tasks & 20 tasks & 8 tasks & 14 tasks & 20 tasks & 8 tasks & 14 tasks & 20 tasks \\

\midrule
TA  & 70.8(76.5) & 65.3(72.1) & 60.5(66.8) & 75.4(79.6) & 70.5(75.9) & 65.8(70.8) & 84.9(88.7) & 79.4(84.0) & 74.0(78.1) \\
25$\%$ PACT-TA  & 72.8(78.6) & \textbf{67.3(74.3)} & \textbf{62.3(68.9)} & 77.9(82.3) & 72.5(78.0) & 66.9(72.1) & 86.5(90.2) & 81.7(86.5) & 76.1(80.3) \\
100$\%$ PACT-TA   & \textbf{75.2(81.0)} & 66.4(73.3) & 59.9(66.4) & \textbf{85.5(90.3)} & \textbf{76.8(82.7)} & \textbf{68.7(74.0)} & \textbf{92.8(96.9)} & \textbf{87.0(92.0)} & \textbf{83.0(87.7)} \\

\midrule
TSV-M         & 85.9(92.3) & 80.1(87.9) & 77.1(84.3) & 89.0(93.9) & 84.6(91.0) & 80.6(86.5) & 93.0(97.0) & 89.2(94.4) & 87.7(92.5) \\
25$\%$ PACT-TSVM   & 86.5(93.1) & 79.8(87.2) & 77.2(84.2) & 89.5(94.4) & 83.7(89.8) & 80.7(86.3) & 93.6(97.6) & 88.0(92.9) & 87.6(92.2) \\
100$\%$ PACT-TSVM   & \textbf{88.1(94.8)} & \textbf{82.0(89.6)} & \textbf{80.3(87.6)} & \textbf{91.2(96.3)} & \textbf{85.4(91.5)} & \textbf{83.8(89.6)} & \textbf{94.4(98.5)} & \textbf{89.5(94.4)} & \textbf{89.7(94.3)} \\

\midrule
Iso-C & 86.6(93.3) & 79.5(87.3) & 75.1(82.2) & 90.8(96.0) & 84.4(90.7) & 79.7(85.4) & 94.7(98.9) & 89.2(94.3) & 87.7(92.4) \\
25$\%$ PACT-IsoC & 86.9(93.6) & 80.0(87.7) & 75.6(82.7) & 90.9(96.1) & 84.7(90.9) & 80.0(85.8) & 94.8(99.0) & 89.4(94.5) & 88.0(92.7) \\
50$\%$ PACT-IsoC & \textbf{87.8(94.7)} & 81.8(89.6) & 78.8(85.9) & 91.6(96.9) & 85.8(92.1) & 82.2(88.0) & 95.0(99.2) & 90.0(95.1) & 89.2(93.9) \\
75$\%$ PACT-IsoC & 87.7(94.4) & \textbf{82.2(90.1)} & \textbf{79.1(86.2)} & \textbf{92.1(97.4)} & \textbf{86.5(92.9)} & 83.2(89.1) & \textbf{95.2(99.3)} & \textbf{90.5(95.6)} & 90.0(94.8) \\
100$\%$ PACT-IsoC   & 87.3(94.1) & 81.3(89.0) & 78.7(86.0) & 92.0(97.2) & 86.1(92.3) & \textbf{83.4(89.3)} & 95.1(98.3) & 90.5(95.5) & \textbf{90.3(95.0)} \\

\bottomrule
\end{tabular}
\caption{Ablation study on filtering depth comparing localized early-layer \texttt{PACT} filtering under various percentages versus $100\%$ global \texttt{PACT} filtering. We present average absolute accuracy and average normalized accuracy (in bracket) in $\%$. Best performances are shown in \textbf{bold}.}
\label{tab:layer_ablation}
\end{table}

Table \ref{tab:layer_ablation} presents the comparison across three base merging algorithms (TA, Iso-C, TSV-M). Overall, integrating \texttt{PACT} systematically improves the multi-task accuracy of all three baseline methods across all evaluated backbones and task scales. Specifically, while baseline methods suffer from performance degradation as the task load increases, the \texttt{PACT}-enhanced variants successfully mitigate this drop, with the best-performing configurations generally concentrated at $75\%$ or $100\%$ filtering depths. To rigorously trace the depth-wise trajectory of this base-protection effect, we focus our analysis primarily on \texttt{PACT-IsoC}, which provides a complete gradient of filtering depths ($25\%$, $50\%$, $75\%$, and $100\%$).

For \texttt{PACT-IsoC}, we observe a non-monotonic performance curve as the filtering depth increases. Moving from the baseline Iso-C to $25\%$ and $50\%$ \texttt{PACT-IsoC}, the accuracy steadily climbs across all configurations, confirming that shielding the load-bearing walls in the early-to-mid layers is highly effective. Interestingly, the performance peaks at different depths depending on model capacity and task scale. For the majority of 8-task and 14-task scenarios on ViT-B/16 and ViT-L/14, the accuracy peaks at $75\%$ depth (e.g., reaching $92.1\%$ and $95.2\%$ on 8 tasks) and slightly plateaus or declines at $100\%$. This indicates that while protecting the first $75\%$ of the layers is essential due to high-energy visual bottlenecks, the deepest $25\%$ of layers route high-level semantic representations in naturally orthogonal directions, making strict orthogonal constraints in these final blocks redundant or over-constraining.

This peak behavior dynamically shifts when scaling the backbone capacity or task load. For the highly parameter-constrained ViT-B/32 backbone, the optimal performance peaks earlier---at $50\%$ depth for 8 tasks ($87.8\%$) and at $75\%$ depth for more tasks. Conversely, under the heaviest task load of 20 tasks on larger backbones (ViT-B/16 and ViT-L/14), the peak shifts to $100\%$ global filtering (achieving $83.4\%$ and $90.3\%$). This shift reveals that under extreme multi-task interference, the necessity of safeguarding the pre-trained core extends globally across all layers, and larger backbones possess sufficient dimensional capacity to accommodate the global orthogonal constraints without suffering from dimensionality lock.

\subsection{Hyperparameter Sensitivity Analysis}
\label{app:hyperparameter_ablation}

\begin{table}[htbp]
\centering
\scriptsize   
\setlength{\tabcolsep}{4pt}
\begin{tabular}{cccccccccc}
\toprule
\multirow{2}{*}{($K$, $k$)} & \multicolumn{3}{c}{ViT-B/32} & \multicolumn{3}{c}{ViT-B/16} & \multicolumn{3}{c}{ViT-L/14} \\
\cmidrule(lr){2-4} \cmidrule(lr){5-7} \cmidrule(lr){8-10}
 & 8 tasks & 14 tasks & 20 tasks & 8 tasks & 14 tasks & 20 tasks & 8 tasks & 14 tasks & 20 tasks \\

\midrule
(0,0) & 86.6(93.3) & 79.5(87.3) & 75.1(82.2) & 90.8(96.0) & 84.4(90.7) & 79.7(85.4) & 94.7(98.9) & 89.2(94.3) & 87.7(92.4) \\
$\alpha$ &1.40 &0.90 &0.90 &1.40 &1.00 &0.80 &1.60 &1.10 &1.00 \\

\midrule
(38,32) & 86.2(92.9) & 79.5(88.1) & 70.2(77.1) & 91.8(97.0) & 85.5(91.9) & 80.0(85.8) & 95.2(99.4) & 90.4(95.5) & 90.2(94.9) \\
$\alpha$ &2.30 &2.80 &4.30 &2.50 &2.70 &5.20 &2.40 &2.00 &3.20 \\

\midrule
(30,24) & 86.6(93.4) & 80.4(88.1) & 76.6(83.7) & 91.9(97.1) & 86.1(92.4) & 82.9(88.8) & 95.2(99.4) & 90.4(95.5) & 90.2(94.9) \\
$\alpha$ & 2.00&2.30 & 3.00&2.20 &2.30 &3.10 &2.40 &1.80 &2.30 \\

\midrule
(23,16) & 87.1(93.9) & 81.2(88.8) & 78.2(85.4) & 92.0(97.2) & \textbf{86.4(92.7)} & 83.1(89.0) & \textbf{95.2(99.4)} & 90.5(95.6) & 90.2(95.0) \\
$\alpha$ &1.90 &1.90 &2.20 &2.00 &1.80 &2.30 &2.20 &1.60 &1.90 \\

\midrule
(15,8)   & \textbf{87.3(94.1)} & \textbf{81.3(89.0)} & \textbf{78.7(86.0)} & \textbf{92.0(97.2)} & 86.1(92.3) & \textbf{83.4(89.3)} & 95.1(98.3) & \textbf{90.5(95.5)} & \textbf{90.3(95.0)} \\
$\alpha$ &1.70 &1.70 &1.60 &1.80 &1.60 &1.60 &2.00 &1.50 &1.50 \\

\bottomrule
\end{tabular}
\caption{Performance of \texttt{PACT-Iso-C} under different hyperparameters ($K$, $k$) and the corresponding scaling coefficient, where (0,0) represents the results of our reproduced Iso-C. We present average absolute accuracy and average normalized accuracy (in brackets) in $\%$. The best performance is in \textbf{bold}.}
\label{tab:abla}
\end{table}

To study the sensitivity of \texttt{PACT} to its primary hyperparameters, we refer to the detailed evaluation presented in Table~\ref{tab:abla}. Table~\ref{tab:abla} examines the effects of varying the pre-trained core dimension $K$ and the active task vector dimension $k$, alongside the corresponding base merging scaling coefficients $\alpha$.

The sensitivity analysis indicates that \texttt{PACT} is relatively robust to different hyperparameter configurations. Specifically, the parameter pair $(K, k) = (15, 8)$ consistently yields favorable absolute and normalized accuracy across diverse dataset scales (8, 14, and 20 tasks) and various model backbones (ViT-B/32, ViT-B/16, and ViT-L/14). This stable behavior suggests that a moderate, low-rank projection configuration is sufficient to successfully segregate the shared general-purpose representation from the task-specific update manifolds, simplifying practical deployments.

\section{Experimental Details} \label{sec:exp_detail}
In this Appendix, we provide the dataset and implementation details used to carry out the experiments presented in the paper.

\subsection{Datasets}

The 8-dataset benchmark consists of: Cars~\citep{krause20133d}, DTD~\citep{cimpoi2014describing}, EuroSAT~\citep{helber2019eurosat}, GTSRB~\citep{stallkamp2011german}, MNIST~\citep{lecun1998gradient}, RESISC45~\citep{cheng2017remote}, SUN397~\citep{xiao2016sun}, and SVHN~\citep{netzer2011reading}.

The 14-dataset benchmark builds on the preceding one, incorporating six additional datasets: CIFAR100~\citep{krizhevsky2009learning}, STL10~\citep{coates2011analysis}, Flowers102~\citep{nilsback2008automated}, OxfordIIITPet~\citep{parkhi2012vedaldi}, PCAM~\citep{veeling2018rotation}, and FER2013~\citep{goodfellow2013challenges}.

Finally, the 20-dataset benchmark includes the preceding 14 plus the following six: EMNIST~\citep{cohen2017emnist}, CIFAR10~\citep{krizhevsky2009learning}, Food101~\citep{bossard2014food}, FashionMNIST~\citep{xiao2017fashion}, RenderedSST2~\citep{socher2013recursive}, and KMNIST~\citep{ba2016layer}.

\begin{itemize}
    \item \textbf{SUN397} ~\citep{xiao2016sun} contains more than $100,000$ images of 397 categories for benchmarking scene understanding. The number of images varies across categories, but there are at least 100 images each.
    
    \item \textbf{Stanford Cars (Cars)} ~\citep{krause20133d} has $16,185$ images in total of 196 types of cars and is evenly split for training and testing sets.
    
    \item \textbf{RESISC45} ~\citep{cheng2017remote} is developed for remote sensing image scene classification. This dataset covers 45 scene classes with 700 images of size $256 \times 256$ for each.
    
    \item \textbf{EuroSAT}~\citep{helber2019eurosat} is used for land use and land cover classification using Sentinel-2 satellite images of size $64 \times 64$, consisting of $27,000$ images covering 10 classes.
    
    \item \textbf{SVHN}~\citep{netzer2011reading} is a street view house number classification benchmark, containing more than $600,000$ RGB images of 10 printed digits in size $32 \times 32$ cropped from house number plates.
    
    \item \textbf{GTSRB} ~\citep{stallkamp2011german} is a German traffic sign recognition benchmark consisting of over $50,000$ images of 43 classes of traffic signs in varying light and background conditions.
    
    \item \textbf{MNIST}~\citep{lecun1998gradient} is a well-known classical dataset for hand-written digit classification with $60,000$ training and $10,000$ testing images of size $28 \times 28$ in 10 classes of numbers.
    
    \item \textbf{DTD}~\citep{cimpoi2014describing} is a collection of $5,640$ images across 47 categories of textures in the wild, annotated with human-centric attributes.

    \item \textbf{Flowers102} ~\citep{nilsback2008automated} contains 102 flower categories that are popular in the United Kingdom, with $1,020$ training and $6,149$ testing images. The images have varying poses and light conditions.
    
    \item \textbf{PCAM (PatchCamelyon)} ~\citep{veeling2018rotation} consists of more than 300M color images in size of $96 \times 96$ pixels extracted from histopathologic scans of lymph node sections. Each of them is annotated with a binary class indicating the presence of metastatic tissue.
    
    \item \textbf{FER2013}~\citep{goodfellow2013challenges} is developed for facial expression recognition. The images are grayscale and have a size of $48 \times 48$ pixels, describing seven different kinds of emotions. The training and testing split consists of $28,709$ and $7,178$ samples, respectively.
    
    \item \textbf{OxfordIIITPet} ~\citep{parkhi2012vedaldi} is a 37-category pet dataset with roughly 200 images for each category, and is equally divided for both training and testing splits. The images vary in scale, pose, and lighting conditions.
    
    \item \textbf{STL10}~\citep{coates2011analysis} is primarily built for unsupervised image recognition tasks covering 10 classes. Hence, the number of labeled images is quite small: 500 training and 800 testing images for each class. All of them are in $96 \times 96$ pixel resolution.
    
    \item \textbf{CIFAR100}~\citep{krizhevsky2009learning} consists of color images categorized in 100 general classes, each class contains 600 images, and each image is in size $32 \times 32$. There are $50,000$ training images and $10,000$ testing images.
    
    \item \textbf{CIFAR10} ~\citep{krizhevsky2009learning} is similar to CIFAR100, except it has 10 classes.
    
    \item \textbf{Food101}~\citep{bossard2014food} contains of 101 food categories, with $101,000$ images. For each class, 750 images are for training and 250 are for testing. Only the testing images are manually reviewed. The training images contain noise mostly from intense colors, and sometimes are mislabelled.
    
    \item \textbf{FashionMNIST} ~\citep{xiao2017fashion} is designed as a drop-in replacement benchmark for the original MNIST, thereby inheriting the same structure as MNIST.
    
    \item \textbf{EMNIST} ~\citep{cohen2017emnist} is an extended version of MNIST. EMNIST contains images of both characters and digits. We choose to use only the EMNIST Letters split, which contains around $145,000$ images evenly distributed in 26 classes of the alphabet letters.
    
    \item \textbf{KMNIST} ~\citep{ba2016layer}, yet another version of MNIST, represents 10 Japanese Hiragana characters.
    
    \item \textbf{RenderedSST2} ~\citep{socher2013recursive,radford2019language} is used for evaluating the models' capability on optical character recognition. The images are rendered from sentences in the Stanford Sentiment Treebank v2 ~\citep{socher2013recursive}, with black texts on a white background in $448 \times 448$ resolution. Each image is labeled as positive or negative based on the mood expressed in the text, and the number of images for both classes is nearly balanced. There are $6,920$ training and $1,821$ testing images.
\end{itemize}

\subsection{Additional Notes}
Our method relies on SVD and RSVD, which is defined for two-dimensional matrices $\Delta \in \mathbb{R}^{m \times n}$. However, some weights of the neural networks are represented by vectors $\delta \in \mathbb{R}^n$, e.g. bias vectors and parameters of layer normalization~\citep{ba2016layer}. Therefore, these parameters bypass the filtering process of the \texttt{PACT} algorithm and directly enter the subsequent merging process.

\subsection{Selection of the scaling factor $\alpha$}
\begin{figure}[htbp]
   \centering
   \includegraphics[width=0.85\linewidth]{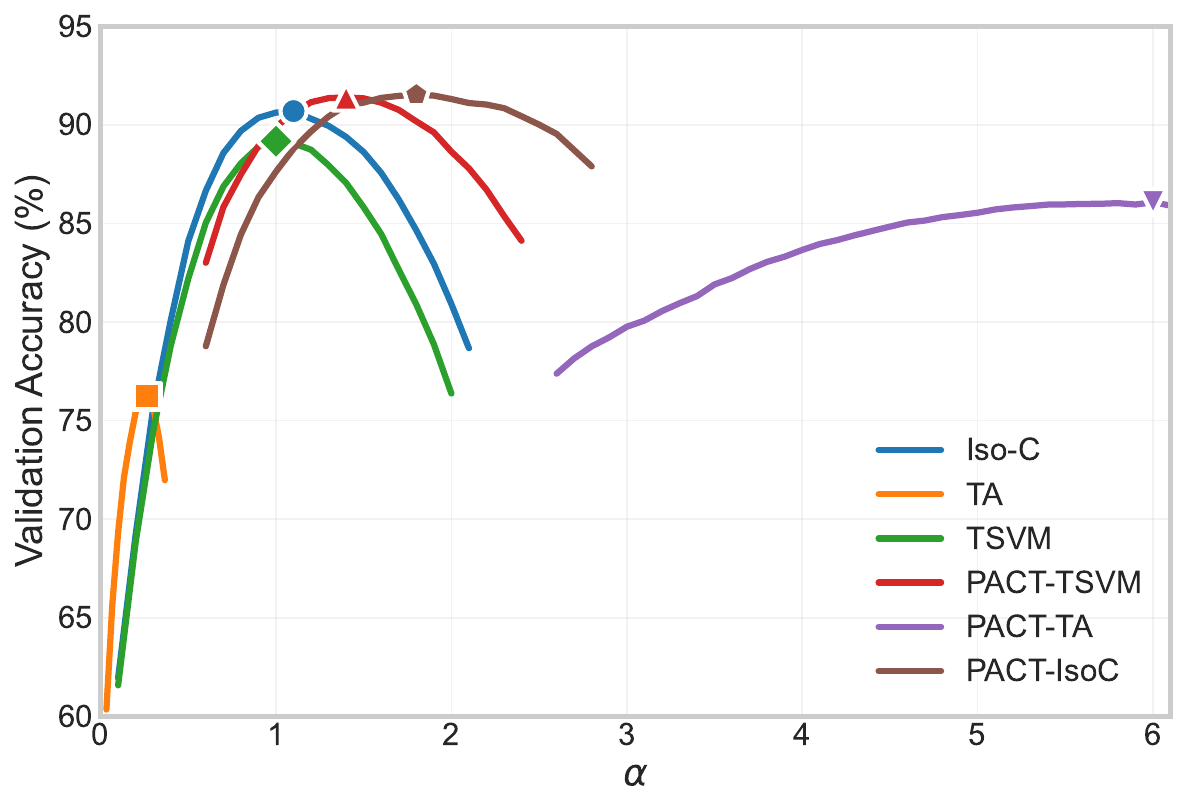}
   \caption{The $\alpha$ is chosen based on the best average performance on the validation set across tasks. Each point denotes the optimal $\alpha$ for each method. The model is ViT-B/16.} 
   \label{fig:scaling} 
\end{figure}
In Figure \ref{fig:scaling}, we present the relationship between validation accuracy and the scaling factor $\alpha$ on the 8-task ViT-B/16 model. TA is sensitive to the choice of $\alpha$, whereas Iso-C and TSV-M are relatively robust, which is consistent with previous findings. Due to the cleaning and filtering effect of \texttt{PACT}, the optimal $\alpha$ values increase for all three methods when combined with \texttt{PACT}, which also aligns with theoretical predictions. Notably, \texttt{PACT-TA} becomes robust to the choice of the optimal $\alpha$, though the value itself becomes substantially large. For reproducibility, in Table \ref{tab:alpha} we provide the optimal $\alpha$ value chosen on the held-out validation set for each model and number of tasks. It should be noted that, in the optimal $\alpha$ search procedures, the early stopping step is set to be 3.

\begin{table}[htbp]
\centering
\scriptsize   
\setlength{\tabcolsep}{4pt}

\begin{tabular}{cccccccccc}
\toprule
\multirow{2}{*}{method} & \multicolumn{3}{c}{ViT-B/32} & \multicolumn{3}{c}{ViT-B/16} & \multicolumn{3}{c}{ViT-L/14} \\
\cmidrule(lr){2-4} \cmidrule(lr){5-7} \cmidrule(lr){8-10}
 & 8 tasks & 14 tasks & 20 tasks & 8 tasks & 14 tasks & 20 tasks & 8 tasks & 14 tasks & 20 tasks \\

\midrule
TA &0.27 &0.10 &0.70 &0.27 &0.13 &0.10 &0.30 &0.17 &0.10 \\
\texttt{PACT-TA} &5.40 &5.30 &4.20 &6.00 &6.70 &8.10 &5.70 &7.10 &10.10 \\

\midrule
Iso-C &1.40 &0.90 &0.90 &1.40 &1.00 &0.80 &1.60 &1.10 &1.00 \\
\texttt{PACT-IsoC} &1.70 &1.70 &1.60 &1.80 &1.60 &1.60 &2.00 &1.50 &1.50 \\

\midrule
TSV-M &1.00 &1.00 &0.90 &1.00 &1.00 &0.90 &1.10 &1.00 &1.00 \\
\texttt{PACT-TSVM} &1.40 &1.50 &1.70 &1.40 &1.60 &1.80 &1.40 &1.70 &1.70 \\

\bottomrule
\end{tabular}
\caption{Optimal $\alpha$ value chosen on a held-out validation set for different model types and numbers of tasks.}
\label{tab:alpha}
\end{table}

\section{Limitation} \label{sec:limit}
Although \texttt{PACT} approaches model merging from a new perspective, namely the identification and protection of LBW dimensions, and achieves substantial improvements over existing methods, several limitations remain. First, despite the proposed efficient variant, \texttt{PACT} still introduces additional computational overhead. Second, there may exist more efficient and accurate strategies for identifying LBW dimensions. We leave these directions for future work.

\end{document}